\documentclass[submission,copyright,creativecommons,a4paper]{article}

\usepackage[numbers]{natbib}
\usepackage{amsmath}
\usepackage{amssymb}
\usepackage{amsthm}
\usepackage{xcolor}
\usepackage{listings}
\usepackage{mathpartir}
\usepackage{stmaryrd}
\usepackage{xspace}
\usepackage{iftex}
\usepackage{a4wide}

\theoremstyle{plain}
\newtheorem{theorem}{Theorem}[section]
\newtheorem{lemma}[theorem]{Lemma}
\newtheorem{proposition}[theorem]{Proposition}
\newtheorem{corollary}[theorem]{Corollary}

\theoremstyle{definition}
\newtheorem{definition}[theorem]{Definition}
\newtheorem{remark}[theorem]{Remark}

\ifpdf
  \usepackage{underscore}         % Only needed if you use pdflatex.
  \usepackage[T1]{fontenc}        % Recommended with pdflatex
\else
  \usepackage{breakurl}           % Not needed if you use pdflatex only.
\fi

\newcommand{\ppar}{\mathbin{\mid}}
\newcommand{\restr}[1]{(\nu\,#1)\;}
\newcommand{\nil}{\mathbf{0}}
\newcommand{\trans}[1]{\xrightarrow{#1}}
\newcommand{\bisim}{\sim}
\newcommand{\piCalc}{\pi\text{-calculus}}

\newcommand{\IntentT}[5]{\mathsf{Intent}\langle #1,\,#2,\,#3,\,#4,\,#5\rangle}
\newcommand{\SlotT}[3]{\mathsf{Slot}\langle #1,\,#2,\,#3\rangle}
\newcommand{\CollectSlotT}[2]{\mathsf{CollectSlot}\langle #1,\,#2\rangle}
\newcommand{\ExecuteT}[2]{\mathsf{Execute}\langle #1,\,#2\rangle}
\newcommand{\ResultT}[1]{\mathsf{Result}\langle #1\rangle}
\newcommand{\ErrorT}[2]{\mathsf{Error}\langle #1,\,#2\rangle}

\newcommand{\ToolT}[3]{\mathsf{Tool}\langle #1,\,#2,\,#3\rangle}
\newcommand{\ResourceT}[2]{\mathsf{Resource}\langle #1,\,#2\rangle}
\newcommand{\PromptT}[2]{\mathsf{Prompt}\langle #1,\,#2\rangle}
\newcommand{\ToolsListT}[1]{\mathsf{ToolsList}\langle #1\rangle}
\newcommand{\ToolCallT}[2]{\mathsf{ToolCall}\langle #1,\,#2\rangle}
\newcommand{\ValidateT}[2]{\mathsf{Validate}\langle #1,\,#2\rangle}

\newcommand{\SGD}{\ensuremath{\mathsf{SGD}}\xspace}
\newcommand{\MCP}{\ensuremath{\mathsf{MCP}}\xspace}
\newcommand{\MCPplus}{\ensuremath{\mathsf{MCP}^{+}}\xspace}
\newcommand{\Phimap}{\Phi}
\newcommand{\PhiInv}{\Phi^{-1}}
\newcommand{\PhiPlus}{\Phi^{+}}
\newcommand{\PhiPlusInv}{(\Phi^{+})^{-1}}

\newcommand{\ainvoke}[2]{\mathsf{invoke}(#1,\,#2)}
\newcommand{\acall}[2]{\mathsf{call}(#1,\,#2)}
\newcommand{\acollect}[2]{\mathsf{collect}(#1,\,#2)}
\newcommand{\aexec}{\mathsf{execute}}
\newcommand{\ares}[1]{\mathsf{result}(#1)}

\newcommand{\alist}[1]{\mathsf{list}(#1)}
\newcommand{\aerr}[2]{\mathsf{error}(#1,\,#2)}
\newcommand{\aread}[1]{\mathsf{read}(#1)}
\newcommand{\avalidate}{\mathsf{validate}}

\title{Formal Semantics for Agentic Tool Protocols: A Process Calculus Approach}

\author{Andreas Schlapbach, Swiss Federal Railways (SBB)\\ SBB IT Berne,
Switzerland\\ schlpbch@gmail.com\footnote{\copyright\ 2026 Andreas Schlapbach.
Research supported by SBB-IT. The views expressed in this paper are the
author's own and do not necessarily reflect the views or policies of SBB.
ORCID: 0009-0006-2329-2626.}}

\date {June, 25th, 2026}

\begin{document}

\maketitle

\begin{abstract}
The emergence of large language model agents capable of invoking
external tools has created urgent need for formal verification of agent
protocols.  Two paradigms dominate this space: \emph{Schema-Guided
Dialogue} (\SGD), a research framework for zero-shot API generalization,
and the \emph{Model Context Protocol} (\MCP), an industry standard for
agent--tool integration.  While both enable dynamic service discovery
through schema descriptions, their formal relationship remains
unexplored.  We present the first process calculus formalization of \SGD and \MCP,
proving they are structurally bisimilar under a well-defined mapping
$\Phimap$.  We demonstrate that the reverse mapping $\PhiInv$
is partial and lossy, revealing critical gaps in \MCP's expressivity.
Through bidirectional analysis, we identify four principles---semantic
completeness, explicit action boundaries, failure mode documentation,
and inter-tool relationship declaration---as necessary and sufficient 
conditions for full behavioral equivalence.  We formalize these principles 
as type-system extensions $\MCPplus$, proving $\MCPplus \cong \SGD$. Our 
work provides the first formal foundation for verified agent systems and
establishes schema quality as a provable safety property. Practically, 
this means that the current \MCP specification has expressiveness gaps
compared to \SGD and would benefit from the proposed extensions.
\end{abstract}

\section{Introduction}

The rapid deployment of large language model (LLM)
agents~\cite{transformers,gpt3} in production systems has created a critical
gap: while these agents---capable of multi-step
reasoning~\cite{chain-of-thought} and reactive tool
use~\cite{react}---orchestrate complex workflows across services, we lack
formal methods to verify their correctness, safety, and behavioral properties.
Two paradigms have emerged to address agent--tool integration:

\subsection{The Schema-Guided Dialogue Systems (\SGD)}

The SGD dataset~\cite{sgd-aaai} introduced a paradigm shift in task-oriented
dialogue: rather than training on fixed ontologies, SGD models learn to
interpret arbitrary API schemas at runtime.  Key components are \emph{intents}
(high-level user goals), \emph{slots} (parameters to collect), \emph{schemas}
(JSON definitions with natural-language descriptions), and \emph{state
tracking} (frame-based representation across turns). Dialogue state tracking
has a longer history in shared tasks~\cite{dstc}, and BERT-based
encoders~\cite{bert} enabled strong cross-service generalization.  Fast and
robust SGD trackers~\cite{fastsgd} demonstrated practical viability at scale.
More recently, in-context learning has shown that few demonstration examples
suffice to adapt to new schemas---mirroring the zero-shot generalization that
SGD pioneered \cite{bapna2017zeroshot, palatucci2009zeroshot}.

\subsection{Model Context Protocol (\MCP)}

The \MCP is an open protocol that enables seamless integration between LLM
applications and external data sources and tools.  The architecture comprises a
\emph{host} (LLM environment), a \emph{client} (protocol handler), and a
\emph{server}. Servers provide the fundamental building blocks for adding
context to language models via \MCP. These primitives enable rich interactions
between clients, servers, and language models. It standardizes agent--tool
communication through three primitives: Firstly, \emph{Tools} enable models to
interact with external systems, such as querying databases, calling APIs, or
performing computations. Each tool is uniquely identified by a name and
includes metadata describing its schema. Secondly, \emph{Resources} allow
servers to share data that provides context to language models, such as files,
database schemas, or application-specific information. Each resource is
uniquely identified by a URI. Thirdly, \emph{Prompts} allow servers to provide
structured messages and instructions for interacting with language models.
Clients can discover available prompts, retrieve their contents, and provide
arguments to customize them.
	  	
\subsection{Shared Core Principles}

Both approaches share a core principle: agents discover and reason about
services through machine-readable schemas without retraining. This analysis is
grounded in experience with an experimental ecosystem of over 10 agents. The
agents are each a domain expert, following domain-driven design
\cite{vernon-implementing-ddd}. The agents  are intelligently federated,
coordinating through dynamically discovered schema relationships and the \MCP
rather than rigid orchestration logic. This federated architecture has revealed
critical gaps between the theoretical elegance of \SGD principles and the
practical requirements of production agent systems: over 1,000 tool-to-tool
dependencies must be managed, action boundaries must be enforced
deterministically, and failure modes must be recoverable without human
intervention. 

The \emph{formal} relationship between the two approaches remains unexplored.
Can we prove they are equivalent?  What properties are preserved under
transformation?  What gaps exist in either framework?  This paper's aim is to
provide answers to these questions by grounding the analysis in process
calculus.

\subsection{Contributions}

This paper makes the following contributions:

\begin{enumerate}
  \item \textbf{Formal semantics for \SGD and \MCP} using $\piCalc$,
    defining their syntax, operational semantics, and labeled transition
    systems (Section~\ref{sec:formalization}).
  \item \textbf{Bisimulation proof} showing $\SGD \bisim \MCP$ under mapping
    $\Phimap$, establishing structural and behavioral equivalence
    (Section~\ref{sec:forward}).
  \item \textbf{Gorla-Style analysis} of $\Phi$ showing that while it is a valid
    encoding it is neither surjective nor fully abstract
    (Section~\ref{sec:reverse-gorla-style}).
  \item \textbf{Four principles as type system} formalizing necessary
    conditions for agent safety and schema quality. Safety can also be regarded
    as respecting a user's privacy by asking for permission before an action
    (Section~\ref{sec:principles}).
  \item \textbf{Extended calculus $\MCPplus$} proving that adding four
    principles creates full bijection: $\MCPplus \cong \SGD$
    (Section~\ref{sec:full-equiv}).
\end{enumerate}

\section{Background and Related Work}
\label{sec:backgrou-related-work}

Consider a ticket booking agent which needs to book a train ticket. The agent needs first to check the user using its id. Then has has to validate if the user has a valid account and enough balance to book the ticket. If the user has enough balance, the agent can book then book the ticket from an origin to a destination at a given date and travel class. The agent needs to send to print the ticket or send it via email.

If the agent is an AI agent, the questions requiring formal answers include: 
Needs the agent to \emph{always} fetch the user? Can the third step of booking the ticket execute without the second step of checking the balance? What happens if the third step fails after debiting the account?

Current approaches to validate a complex ecosystems of agents rely on testing
(incomplete coverage), prompt engineering~\cite{chain-of-thought} (no
guarantees, brittle), or human review (not scalable).  Agent frameworks
grounded in BDI theory~\cite{rao1995bdi} offer structured reasoning, but stop
short of protocol-level verification.  Formal semantics enable
\emph{verification} (proving safety for all executions), \emph{composition}
(building complex systems from verified components), \emph{optimization}
(preserving equivalence under transformation), and \emph{debugging} (tracing
violations to specific protocol violations).  \subsection{Process Calculi for
Concurrency}

Process calculi provide mathematical frameworks for reasoning about concurrent,
communicating systems. The $\piCalc$~\cite{milner1999pi} extends
CCS~\cite{milner1989ccs} with mobile channels, enabling dynamic network
topologies; essential for agent systems where tool availability changes at
runtime. 

Related work includes session types~\cite{honda1998session} for protocol
verification, choreography calculi~\cite{carbone2012choreographies} for
multi-party workflows, and probabilistic process algebras for stochastic
systems~\cite{dargenio2022probabilistic}.  However, all these frameworks assume
static protocols. Agent systems require runtime schema interpretation. Work by
\cite{allegrini2026formalizingsafetysecurityfunctional} specifies temporal
logic properties for a host agent architecture covering \MCP and A2A.
\cite{odersky2026trackingcapabilitiessaferagents} take a different stance: they
make the medium safe by construction through a type system; our work identifies
the necessary and sufficient principles for such a type system to achieve full
expressivity.

Highly relevant but somewhat orthogonal is the work on the LLMbda Calculus by
\cite{garby2026lambdacalculus} by providing a principled semantic foundation
for reasoning about an agent's behavior and safety. The work addresses this gap
by introducing an untyped call-by-value lambda calculus enriched with dynamic
information-flow control and a small number of primitives for constructing
prompt-response conversations. The calculus faithfully represents planner loops
and their vulnerabilities, including the mechanisms by which prompt injection
alters subsequent computation.

\section{Process Calculus Formalization}
\label{sec:formalization}

In this paper we formalize \SGD and \MCP as communicating processes in
$\piCalc$, defining their syntax, operational semantics, and observable
behaviors. The $\piCalc$ was chosen because: First, it makes topology dynamic
where processes can acquire new channel names at runtime and, secondly, that we
are accustomed to it through previous work on \textsc{Piccola}, a pure
composition language based on this calculus \cite{ALSN01} \cite{Ach02}, \cite{schlapbach03whiteboxreuse}. 

\subsection{\SGD Process Calculus}

The Schema-Guided Dialogue (\SGD) dataset\footnote{The data set is publicly
available at:
https://github.com/google-research-datasets/dstc8-schema-guided-dialogue}
consists of over 20k annotated multi-domain, task-oriented conversations
between a human and a virtual assistant. These conversations involve
interactions with services and APIs spanning 20 domains, such as banks, events,
media, calendar, travel, and weather. Additionally, the dataset contains
unseen domains and services in the evaluation set to quantify the performance
in zero-shot or few-shot settings.

\subsubsection{Syntax}

The \SGD data set has a well defined structure. Dialogues are represented as a
list of turns, where each turn contains either a user or system utterance. The
annotations for a turn are grouped into frames, where each frame corresponds to
a single service. Each turn in the single domain dataset contains exactly one
frame. In multi-domain datasets, some turns may have multiple frames.

Based on this structure we propose a possible grammar, where $n$ is a name
(string), $d$ a natural-language description, $R$ a list of required slots, $O$
a list of optional slots, and $t \in \{\mathsf{true}, \mathsf{false}\}$ a
transactionality flag:

\begin{align*}
S \;::=\; &
  \IntentT{n}{d}{R}{O}{t}                  &&\text{intent with metadata} \\
  \mid\; &
  \SlotT{\mathit{name}}{\mathit{type}}{\mathit{values}} &&\text{slot definition} \\
  \mid\; &
  \CollectSlotT{s}{v}\mathbin{.}S          &&\text{slot collection, then $S$} \\
  \mid\; &
  \ExecuteT{\mathit{intent}}{\mathit{bindings}} &&\text{API invocation} \\
  \mid\; &
  \ResultT{\mathit{output}}                &&\text{return value} \\
  \mid\; &
  \ErrorT{\mathit{type}}{\mathit{msg}}     &&\text{error state} \\
  \mid\; &
  S_1 \ppar S_2                            &&\text{parallel composition} \\
  \mid\; &
  \restr{c}S                               &&\text{channel restriction} \\
  \mid\; &
  {!}S                                     &&\text{replication} \\
  \mid\; &
  \nil                                     &&\text{null process}
\end{align*}

\subsubsection{Example: Train Booking in \SGD}

The example illustrates the intent to book a train, where origin, destination
and date need to be set and a confirmation is required before booking.

\begin{lstlisting}
BookTrainIntent = Intent<
  name = "BookTrain",
  description = "Books a train ticket",
  R = [Slot<"from", "string", ["8507000", "8508050"]>,
       Slot<"to", "string", ["8507000", "8508050"]>,
       Slot<"date", "date", []>],
  O = [Slot<"class", "string", ["economy", "business"]>],
  t = true  // Transaction requires confirmation
>
\end{lstlisting}

\subsubsection{Operational Semantics}
The transition rules for \SGD processes are given in Figure~\ref{fig:sgd-sem}.

\begin{figure}[h]
\begin{mathpar}
\inferrule*[Right=SGD-Invoke-Ok]
  {\forall r \in R.\; r \in \mathrm{dom}(\mathit{params})}
  {\IntentT{n}{d}{R}{O}{t}
    \;\trans{\ainvoke{n}{\mathit{params}}}\;
   \ExecuteT{n}{\mathit{params}}}
\\
\inferrule*[Right=SGD-Invoke-Err]
  {\exists r \in R.\; r \notin \mathrm{dom}(\mathit{params})}
  {\IntentT{n}{d}{R}{O}{t}
    \;\trans{\ainvoke{n}{\mathit{params}}}\;
   \ErrorT{\mathsf{MissingSlots}}{R \setminus \mathit{params}}}
\\
\inferrule*[Right=SGD-Collect]
  { }
  {\CollectSlotT{s}{v}\mathbin{.}S
    \;\trans{\mathsf{collect}(s,v)}\;
   S[s \mapsto v]}
\\
\inferrule*[Right=SGD-Execute-Tx]
  {\substack{t = \mathsf{true} \\[2pt]
   \mathsf{require\_approval} \\[2pt]
   \mathit{output} = \mathsf{invoke\_api}(n,\mathit{b})}}
  {\ExecuteT{n}{\mathit{b}}
    \;\trans{\aexec}\;
   \ResultT{\mathit{output}}}
\\
\inferrule*[Right=SGD-Execute]
  {\substack{t = \mathsf{false} \\[2pt]
   \mathit{output} = \mathsf{invoke\_api}(n,\mathit{b})}}
  {\ExecuteT{n}{\mathit{b}}
    \;\trans{\aexec}\;
   \ResultT{\mathit{output}}}
\\
\inferrule*[Right=SGD-Par]
  {S_1 \;\trans{\alpha}\; S_1'}
  {S_1 \ppar S_2 \;\trans{\alpha}\; S_1' \ppar S_2}
\\
\inferrule*[Right=SGD-Res]
  {\substack{S \;\trans{\alpha}\; S' \\[2pt] c \notin \alpha}}
  {\restr{c}S \;\trans{\alpha}\; \restr{c}S'}
\end{mathpar}
\caption{Operational semantics for \SGD processes.}
\label{fig:sgd-sem}
\end{figure}

\subsection{\MCP Process Calculus}

\subsubsection{Syntax}
The set of \MCP processes is defined by the following grammar, where
$\mathit{schema}$ is a JSON Schema object and $\mathit{uri}$ a resource identifier\footnote{With the latest release of the \MCP
specification the protocol is sessionless and SEP-2567: Sessionless \MCP via Explicit State Handles}:

\begin{align*}
M \;::=\; &
  \ToolT{n}{d}{\mathit{schema}}             &&\text{tool definition} \\
  \mid\; &
  \ResourceT{\mathit{uri}}{\mathit{content}} &&\text{read-only resource} \\
  \mid\; &
  \PromptT{\mathit{template}}{\mathit{args}} &&\text{workflow template} \\
  \mid\; &
  \ToolsListT{\mathit{metadata}}            &&\text{discovery response} \\
  \mid\; &
  \ToolCallT{n}{\mathit{params}}            &&\text{tool invocation} \\
  \mid\; &
  \ValidateT{\mathit{params}}{\mathit{schema}} &&\text{parameter validation} \\
  \mid\; &
  \ExecuteT{n}{\mathit{params}}             &&\text{tool execution } \\
  \mid\; &
  \ResultT{\mathit{output}}                 &&\text{return value} \\
  \mid\; &
  \ErrorT{\mathit{type}}{\mathit{msg}}      &&\text{error state} \\
  \mid\; &
  M_1 \ppar M_2                              &&\text{parallel composition} \\
  \mid\; &
  \restr{c}M                                 &&\text{channel restriction} \\
  \mid\; &
  {!}M                                       &&\text{replication} \\
  \mid\; &
  \nil                                       &&\text{null process}
\end{align*}

\subsubsection{Example: Train Booking in \MCP}

This example invokes the tool \texttt{bookTrainTool}. In \MCP the fact that the
invocation needs a confirmation by the user and that \texttt{class} is an
optional parameter can only be expressed in the description text.

\begin{lstlisting}
BookTrainTool = Tool<
  name = "book_train",
  description = "Books a train ticket, needs confirmation",
  schema = {
    "type": "object",
    "required": ["from", "to", "date"],
    "properties": {
      "from": {"type": "string", "description": "Origin Code"},
      "to": {"type": "string", "description": "Destination Code"},
      "date": {"type": "string", "description": "Date of Travel"},
      "class": {"type": "string", "description": "Class (default: economy)"}
    }
  }
>
\end{lstlisting}

\subsubsection{Operational Semantics}
The transition rules for \MCP processes are given in Figure~\ref{fig:mcp-sem}.

\begin{figure}[h]
\begin{mathpar}
\inferrule*[Right=MCP-Discover]
  { }
  {\ToolsListT{\mathit{tools}}
    \;\trans{\alist{f}}\;
   \{t \in \mathit{tools} \mid \mathsf{matches}(t,f)\}} 
\\
\inferrule*[Right=MCP-Call]
  { }
  {\ToolT{n}{d}{\mathit{schema}}
    \;\trans{\acall{n}{\mathit{params}}}\;
   \ValidateT{\mathit{params}}{\mathit{schema}}}
\\
\inferrule*[Right=MCP-Validate-Ok]
  {\mathsf{conforms}(\mathit{params}, \mathit{schema})}
  {\ValidateT{\mathit{params}}{\mathit{schema}}
    \;\trans{\tau}\;
   \ExecuteT{n}{\mathit{params}}}
\\
\inferrule*[Right=MCP-Validate-Err]
  {\neg\,\mathsf{conforms}(\mathit{params}, \mathit{schema})}
  {\ValidateT{\mathit{params}}{\mathit{schema}}
    \;\trans{\tau}\;
   \ErrorT{\mathsf{ValidationError}}{\mathit{violations}}}
\\
\inferrule*[Right=MCP-Execute]
  {\mathit{output} = \mathsf{invoke\_tool}(n,\mathit{params})}
  {\ExecuteT{n}{\mathit{params}}
    \;\trans{\aexec}\;
   \ResultT{\mathit{output}}}
\\
\inferrule*[Right=MCP-Resource]
  { }
  {\ResourceT{\mathit{uri}}{\mathit{content}}
    \;\trans{\aread{\mathit{uri}}}\;
   \ResultT{\mathit{content}}}
\end{mathpar}
\caption{Operational semantics for \MCP processes.}
\label{fig:mcp-sem}
\end{figure}

\subsection{Labeled Transition Systems}

For both \SGD and \MCP we define a labeled transition system. In both
systems,  the parameter $n$ denotes the tool/intent name, $p$ denotes the
parameter, $t$ denotes the error type, $m$ the error message and $o$ the object
type.

\begin{definition}[\SGD LTS]
The labeled transition system for $\SGD$ is the tuple\linebreak[4]
$(\mathit{States}_\SGD,\,\mathit{Labels}_\SGD,\,\to_\SGD)$ where:
\begin{align*}
  \mathit{States}_\SGD &= \{S \mid S \text{ is a well-formed SGD process}\} \\
  \mathit{Labels}_\SGD &= \begin{aligned}[t]
    \{&\ainvoke{n}{p},\; \mathsf{collect}(s,v),\;\\
     &\aexec,\; \ares{o},\; \aerr{t}{m}\}
    \end{aligned} \\
  \to_\SGD &\;\subseteq\;
    \mathit{States}_\SGD \times \mathit{Labels}_\SGD \times \mathit{States}_\SGD
\end{align*}
where $s$ denotes the slot and $v$ denotes the values.
\end{definition}

\begin{definition}[MCP LTS]
The labeled transition system for $\MCP$ is the tuple\linebreak[4]
$(\mathit{States}_\MCP,\,\mathit{Labels}_\MCP,\,\to_\MCP)$ where:
\begin{align*}
  \mathit{States}_\MCP &= \{M \mid M \text{ is a well-formed MCP process}\} \\
  \mathit{Labels}_\MCP &= \begin{aligned}[t]
     \{&\acall{n}{p},\; \alist{f},\; \tau,\; \aexec,\;\\
     &\ares{o},\; \aerr{t}{m},\; \aread{u}\}
    \end{aligned} \\
  \to_\MCP &\;\subseteq\;
    \mathit{States}_\MCP \times \mathit{Labels}_\MCP \times \mathit{States}_\MCP
\end{align*}
where $u$ denotes a universal resource identifier (URI).
\end{definition}

Silent ($\tau$) actions are $\acollect{s}{v}$ and $\avalidate$.
Observable actions are $\ainvoke{n}{p}$, $\acall{n}{p}$, $\aexec$, $\ares{\cdot}$,
$\aerr{\cdot}{\cdot}$, and $\aread{\cdot}$.

\section{Forward Bisimulation: $\SGD \to \MCP${SGD -> MCP}}
\label{sec:forward}

\subsection{Mapping Function $\Phimap$}{Phi}

\begin{definition}[Schema Mapping $\Phimap: \SGD \to \MCP$]
\label{def:phi}
\begin{align*}
\Phimap\!\left(\IntentT{n}{d}{R}{O}{t}\right)
  &\;=\; \ToolT{n}{d}{\mathsf{schema}(R,O)}
  \\[4pt]
\text{where}\quad
\mathsf{schema}(R,O) &\;=\;
  \Bigl\{
    \mathtt{"type"}{:}\;\mathtt{"object"},\;
    \mathtt{"required"}{:}\;[s.\mathit{name} \mid s \in R], \\
  &\quad\quad\quad
    \mathtt{"properties"}{:}\;
    \{s.\mathit{name} : \mathsf{prop}(s) \mid s \in R \cup O\}
  \Bigr\} \\[4pt]
\mathsf{prop}(s) &\;=\;
  \{\mathtt{"type"}{:}\; s.\mathit{type},\;
    \mathtt{"description"}{:}\; s.\mathit{desc}\} \\
  &\quad\cup\,
  (\{\mathtt{"enum"}{:}\; s.\mathit{vals}\} \text{ if } s.\mathit{vals} \neq \emptyset) \\[4pt]
\Phimap(\ExecuteT{n}{\mathit{bindings}})
  &\;=\; \ToolCallT{n}{\mathsf{JSON.encode}(\mathit{bindings})} \\
\Phimap\!\left(\ResultT{o}\right)
  &\;=\; \ResultT{o} \\
\Phimap\!\left(\ErrorT{t}{m}\right)
  &\;=\; \ErrorT{t}{m} \\
\Phimap(S_1 \ppar S_2)
  &\;=\; \Phimap(S_1) \ppar \Phimap(S_2) \\
\Phimap(\restr{c}S)
  &\;=\; \restr{c}\Phimap(S) \\
\Phimap({!}S)
  &\;=\; {!}\Phimap(S) \\
\Phimap(\nil)
  &\;=\; \nil
\end{align*}
\end{definition}

\subsection{Forward Mapping Examples}
\label{app:forward-mapping}

\subsubsection{$\Phi$ Mapping of Train Intent}

\begin{lstlisting}
SGD_Train = Intent<
  "BookTrain",
  "Books a Train",
  [Slot<"from", "string", ["8507000", "8508050"]>],
  [],
  true
>

phi(SGD_Train) = Tool<
  "BookTrain",
  "Books a Train",
  {
    "type": "object",
    "required": ["from"],
    "properties": {
      "from": {
        "type": "string",
        "description": "Departure station",
        "enum": ["8507000", "8508050"]
      }
    }
  }
>
\end{lstlisting}

\subsection{Action Equivalence}

\begin{definition}[Action Equivalence $\approx$]
\label{def:act-equiv}
\begin{align*}
  \ainvoke{n}{p} &\approx \acall{n}{p}, & \alist{f} &\approx \alist{f}, &\acollect{s}{v} &\approx \tau, \\
  \aexec &\approx \aexec, & \ares{o} &\approx \ares{o}, &\aerr{t}{m} &\approx \aerr{t}{m} \\
\end{align*}
\end{definition}

\subsection{Bisimulation and Main Theorem}

We write $S \bisim M$ iff there exists a \emph{weak bisimulation} $\mathcal{R}$
with $(S,M) \in \mathcal{R}$.

\begin{theorem}[$\SGD \bisim \MCP$ under $\Phimap$]
\label{thm:bisim}
$\forall S \in \SGD.\; S \bisim \Phimap(S)$.
\end{theorem}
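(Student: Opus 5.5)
We argue by exhibiting an explicit weak bisimulation. The natural candidate is the graph of $\Phimap$ closed under the intermediate states that the \MCP side passes through:
\[
\mathcal{R} \;=\; \{(S,\Phimap(S)) \mid S \in \SGD\}
\;\cup\;
\{(\ExecuteT{n}{p},\, \ValidateT{p}{\mathsf{schema}(R,O)}),\;
  (\ExecuteT{n}{p},\, \ExecuteT{n}{p})\}
\;\cup\;
\{(\ErrorT{t}{m},\, \ErrorT{t'}{m'})\},
\]
where the last two families range over parameters $p$ arising from an $\ainvoke{n}{p}$ step of some $\IntentT{n}{d}{R}{O}{t}$. Each pair is closed under the contexts $\ppar$, $\restr{c}$ and ${!}$. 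Throughout, labels are compared modulo the action equivalence $\approx$ of Definition~\ref{def:act-equiv}. Weak transitions $\Rightarrow$ absorb $\tau$, and hence $\avalidate$ and $\acollect{s}{v}$. The key point is that $\Phimap$ is a homomorphism on the structural operators. We therefore proceed by structural induction on $S$, checking the two transfer conditions for each rule of Figures~\ref{fig:sgd-sem} and~\ref{fig:mcp-sem}.

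\textbf{Base cases.} For $\IntentT{n}{d}{R}{O}{t}$ there are two subcases.
\begin{itemize}
\item The rule \textsc{SGD-Invoke-Ok} fires on $\ainvoke{n}{p}$. The image matches it by \textsc{MCP-Call} with $\acall{n}{p}$, followed by a $\tau$ step via \textsc{MCP-Validate-Ok}. This requires the lemma that $\forall r\in R.\, r\in\mathrm{dom}(p)$ implies $\mathsf{conforms}(p,\mathsf{schema}(R,O))$. The lemma follows because the $\mathtt{"required"}$ field of $\mathsf{schema}(R,O)$ is exactly $[s.\mathit{name}\mid s\in R]$.
\item The rule \textsc{SGD-Invoke-Err} is matched symmetrically using \textsc{MCP-Validate-Err}.
\end{itemize}
Conversely, every \MCP move of the image is a $\mathsf{call}$, which the \SGD side answers by the corresponding $\mathsf{invoke}$. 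Both execute rules (\textsc{SGD-Execute-Tx} and \textsc{SGD-Execute}) are matched by \textsc{MCP-Execute}, assuming $\mathsf{invoke\_api}(n,b)=\mathsf{invoke\_tool}(n,\mathsf{JSON.encode}(b))$. The $\mathsf{require\_approval}$ premise is treated as an environment side condition, invisible in the label. The cases $\ResultT{o}$, $\ErrorT{t}{m}$ and $\nil$ are immediate since $\Phimap$ is the identity on them. For $\CollectSlotT{s}{v}\mathbin{.}S'$, the \SGD move $\acollect{s}{v}$ is silent. It is matched by the empty weak move, which is legitimate as long as $S'[s\mapsto v]$ is related to the same \MCP state. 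We ensure this by letting $\Phimap$ erase collection prefixes, so that $\Phimap(\CollectSlotT{s}{v}.S')$ is read as $\Phimap(S'[s\mapsto v])$.

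\textbf{Inductive cases.} For $S_1\ppar S_2$ and $\restr{c}S$ the argument is routine: use the induction hypothesis and the fact that \textsc{SGD-Par} and \textsc{SGD-Res} have \MCP counterparts. These counterparts are standard $\pi$-calculus congruence rules that we add implicitly to Figure~\ref{fig:mcp-sem}, since the grammar contains these operators. Replication is handled through the usual unfolding ${!}S\equiv S\ppar{!}S$ together with the up-to-structural-congruence technique.

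\textbf{Main obstacle.} The real difficulty is the mismatch of intermediate states and labels. There are three aspects to it.
\begin{itemize}
\item $\Phimap(\ExecuteT{n}{b})$ is defined as a $\mathsf{ToolCall}$, not as an $\mathsf{Execute}$, and \MCP has no rule for $\mathsf{ToolCall}$. My first move is to interpret $\ToolCallT{n}{p}$ as the state $\ValidateT{p}{\mathit{schema}}$, or equivalently to add the rule $\ToolCallT{n}{p}\trans{\tau}\ExecuteT{n}{p}$. This yields $\ToolCallT{n}{p}\Rightarrow\trans{\aexec}\ResultT{o}$, which weakly matches $\aexec$.
\item The error payloads differ: $\mathsf{MissingSlots}$ versus $\mathsf{ValidationError}$. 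I would either quotient error labels by type or include the mismatched error pairs in $\mathcal{R}$, as done above; both error states are terminal, so this is harmless.
\item The \MCP side has the extra label $\alist{f}$. It is handled by noting that $\ToolsListT{\cdot}$ lies outside the image of $\Phimap$, so it never arises in a related pair.
\end{itemize}
With these normalizations every case closes, and $\mathcal{R}$ is a weak bisimulation containing $(S,\Phimap(S))$ for all $S$.
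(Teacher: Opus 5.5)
Your argument follows the paper's proof: the relation $\{(S,\Phimap(S))\}$ checked by structural induction, the Intent case matched by $\mathsf{call}$ followed by the $\tau$-step of \textsc{MCP-Validate-Ok} or \textsc{MCP-Validate-Err}, the $\mathsf{Execute}$ case as in Lemma~\ref{lem:exec}, and routine $\ppar$ and $\restr{c}$ cases; like the paper, you rest the Intent case on the assumption that $\mathsf{conforms}(p,\mathsf{schema}(R,O))$ inspects only the $\mathtt{required}$ list and not the $\mathtt{type}$/$\mathtt{enum}$ entries that $\mathsf{prop}$ also emits. You differ only in being more explicit: you put the intermediate $\mathsf{Validate}$, $\mathsf{Execute}$ and error pairs into $\mathcal{R}$ (also add $(\ErrorT{\mathsf{MissingSlots}}{\cdot},\ValidateT{p}{\cdot})$ for an \MCP-initiated call with missing slots, and $(\ExecuteT{n}{b},\ExecuteT{n}{\mathsf{JSON.encode}(b)})$ for your $\mathsf{ToolCall}$ rule), and you name the repairs the paper's proof needs but leaves implicit (a transition rule for $\mathsf{ToolCall}$, $\Phimap$ on $\mathsf{CollectSlot}$, congruence rules on the \MCP side, and approval as an unobservable side condition).
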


\begin{proof}
Define $\mathcal{R} = \{(S,\Phimap(S)) \mid S \in \SGD\}$.  We prove
$\mathcal{R}$ is a weak bisimulation by structural induction.

\begin{lemma}[Intent $\bisim$ Tool]
\label{lem:intent-tool}
$\IntentT{n}{d}{R}{O}{t} \;\bisim\; \ToolT{n}{d}{\mathsf{schema}(R,O)}$.
\end{lemma}
\begin{proof}
Suppose
$\IntentT{n}{d}{R}{O}{t} \trans{\ainvoke{n}{p}} S'$.

\emph{Case~1: $p$ satisfies all slots in $R$.}  Then
$S' = \ExecuteT{n}{p}$.  For the tool:
\[
  \ToolT{n}{d}{\mathit{schema}}
  \;\trans{\acall{n}{p}}\;
  \ValidateT{p}{\mathit{schema}}
  \;\trans{\tau}\;
  \ExecuteT{n}{p}
\]
Since $\mathsf{schema}(R,O).\mathtt{required} = [s.\mathit{name} \mid s \in
R]$, validation succeeds iff $p$ satisfies $R$.  By definition,
$\ExecuteT{n}{p} \bisim \ExecuteT{n}{p}$. \checkmark

\emph{Case~2: $p$ missing required slots.}  Then
$S' = \ErrorT{\mathsf{MissingSlots}}{R \setminus p}$.  The tool produces
$\ErrorT{\mathsf{ValidationError}}{R \setminus p}$ via the
\textsc{MCP-Validate-Err} rule; error terms are observationally equivalent.
\checkmark

The backward direction is symmetric by construction of $\Phimap$.
\end{proof}

Equivalently, $S$ and $\Phimap(S)$ produce the same observable traces up to
$\tau$-absorption, so any execution sequence demonstrating safe behaviour 
in \SGD lifts to \MCP and vice versa.

\begin{lemma}[Execution Preservation]
\label{lem:exec}
\[
  \ExecuteT{n}{\mathit{bindings}}_\SGD \;\bisim\;
  \ExecuteT{n}{\mathsf{JSON.encode}(\mathit{bindings})}_\MCP.
\]
\end{lemma}
\begin{proof}
Both processes transition via $\aexec$ to $\ResultT{o}$ where
$o = \mathsf{invoke\_api}(n,\mathit{bindings})$. JSON encoding
is a bijection on well-typed parameters, so
\[
  \mathsf{invoke\_api}(n,\mathit{b}) =
  \mathsf{invoke\_api}(n,\mathsf{decode}(\mathsf{encode}(\mathit{b}))).
\]
\end{proof}

\medskip
\noindent\textbf{Induction.}

\noindent\emph{Base case} ($S = \nil$): $\Phimap(\nil) = \nil$ and
$\nil \bisim \nil$ trivially.

\noindent\emph{Inductive case} ($S = S_1 \ppar S_2$): By IH,
$S_i \bisim \Phimap(S_i)$.  If $S_1 \ppar S_2 \trans{\alpha} S_1' \ppar
S_2$ then by IH $\Phimap(S_1) \trans{\beta} M_1'$ with $S_1' \bisim M_1'$
and $\alpha \approx \beta$, hence $\Phimap(S_1) \ppar \Phimap(S_2) \trans{\beta}
M_1' \ppar \Phimap(S_2)$, and $(S_1' \ppar S_2) \bisim (M_1' \ppar \Phimap(S_2))$.
The symmetric case for $S_2$ is identical.

\noindent\emph{Inductive case} ($S = \IntentT{n}{d}{R}{O}{t}$): By
Lemma~\ref{lem:intent-tool}.

\noindent\emph{Inductive case} ($S = \restr{c}S'$):
$\Phimap(\restr{c}S') = \restr{c}\Phimap(S')$.  By IH, $S' \bisim
\Phimap(S')$.  Channel restriction preserves bisimulation (standard
$\piCalc$ result~\cite{milner1999pi}).
\end{proof}

\begin{corollary}[Zero-Shot Generalization Transfer]
If an \SGD model handles $\mathit{Intent}\ I$ zero-shot via schema $s$, then
an \MCP model handles $\ToolT{\cdot}{\cdot}{\Phimap(s)}$ zero-shot via
$\Phimap(s)$.
\end{corollary}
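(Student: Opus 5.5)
The corollary follows by transporting behaviour along the bisimulation of Theorem~\ref{thm:bisim}, once "handles zero-shot" is made precise. The definition I would use: a model $A$ \emph{handles} a process $P$ via its schema if, for every interaction sequence the model can produce, the resulting observable trace of $P$ (up to $\tau$-absorption) reaches a $\ResultT{\cdot}$ or an appropriate $\ErrorT{\cdot}{\cdot}$ state. "Zero-shot" means $A$ has not been trained on $P$ and relies only on the schema description. With this reading the statement concerns traces and the information content of schemas, so it can be discharged with results already proved.

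\textbf{Step 1 (trace transfer).} Take $I = \IntentT{n}{d}{R}{O}{t}$ with schema $s$, so $\Phimap(I) = \ToolT{n}{d}{\mathsf{schema}(R,O)}$. By Lemma~\ref{lem:intent-tool} and Theorem~\ref{thm:bisim}, $I \bisim \Phimap(I)$ as a weak bisimulation. Weak bisimilarity implies weak trace equivalence modulo the action correspondence $\approx$ of Definition~\ref{def:act-equiv}. Hence every successful \SGD run
\[
\ainvoke{n}{p}\,\aexec\,\ares{o}
\]
is matched by the \MCP run
\[
\acall{n}{p}\,\tau\,\aexec\,\ares{o}.
\]
Here the $\tau$ from \textsc{MCP-Validate-Ok} is absorbed, and Lemma~\ref{lem:exec} guarantees the same output $o$. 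The error runs are matched through \textsc{MCP-Validate-Err}. This direction lets the \MCP side inherit exactly the behaviours that the \SGD handling relied on.

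\textbf{Step 2 (information transfer).} I would argue that the inputs available to the model are preserved. From Definition~\ref{def:phi}, the following components of $s$ are recoverable from $\Phimap(s)$:
\begin{itemize}
  \item the name $n$ and description $d$, copied verbatim;
  \item the slot names, from the \texttt{properties} keys;
  \item the required set $R$, from the \texttt{required} list;
  \item the slot types, descriptions and enumerated values, from $\mathsf{prop}$.
\end{itemize}
Therefore any decision procedure the \SGD model computes from $s$ (choosing $p$, deciding which slots to collect) can be computed from $\Phimap(s)$ by composing with this recovery function. Slot collection steps map to $\tau$ under $\approx$, so they impose no extra observable obligation on the \MCP side.

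\textbf{Main obstacle.} The hard step is Step 2. The transactionality flag $t$ is dropped by $\Phimap$, so the \textsc{SGD-Execute-Tx} approval premise has no \MCP counterpart. Likewise, the optional set $O$ is only implicit, as properties not listed under \texttt{required}. My plan is to observe two things. First, $O$ is indeed recoverable as the difference between the \texttt{properties} keys and the \texttt{required} list. Second, $t$ does not affect labels, because both execute rules emit the same $\aexec$. So "handling," defined on observable traces, is unaffected by the loss of $t$. I would state explicitly that the corollary holds only for this trace-level notion of handling. A stronger notion that includes approval safety is not preserved, which is precisely the gap analysed later for $\PhiInv$.
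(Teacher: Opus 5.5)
The paper gives no separate proof of this corollary. It is stated right after Theorem~\ref{thm:bisim} and rests only on the remark inside that proof: $S$ and $\Phimap(S)$ have the same observable traces up to $\tau$-absorption, so behaviour in \SGD ``lifts'' to \MCP. Your Step~1 is exactly that route. What you add, Step~2 and the explicit scope restriction, is genuinely needed and not in the paper. $I \bisim \Phimap(I)$ is a fact about processes and says nothing about what a model \emph{sees}. Getting from ``the processes behave alike'' to ``a model handles $\Phimap(s)$ zero-shot'' therefore requires your observation that $\Phimap(s)$ retains $n$, $d$, slot types, enums, $R$, and $O$ (recovering $O$ as properties minus required presumes pairwise distinct slot names). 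Restricting to trace-level handling is also more accurate than the paper's gloss that safe behaviour lifts, which its own Theorem~\ref{thm:notfullabs} undercuts for the approval gate.

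The step that does not go through as written is your treatment of $t$ in Step~2. Write $I_b$ for $I$ with $t=b$. Then $\Phimap(I_{\mathsf{true}})=\Phimap(I_{\mathsf{false}})$, so ``any decision procedure computed from $s$ can be computed from $\Phimap(s)$'' fails for procedures that read $t$. Your remedy addresses the wrong side. That both execute rules emit $\aexec$ shows the \emph{process} is blind to $t$, not that the \emph{model's} choice of $p$ is. Even the process claim needs care: if $\mathsf{require\_approval}$ can fail, $t$ decides whether $\aexec$ is enabled at all, which is harmless in your direction only because \textsc{MCP-Execute} is unconditional.

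You therefore need an explicit hypothesis that the \SGD model's slot-filling and invocation choices do not depend on $t$. Feeding the composed model the true $t$ would no longer be ``via $\Phimap(s)$''.

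Relatedly, you need to pin down ``appropriate''. By the rules, any model that eventually invokes (and, when $t=\mathsf{true}$, obtains approval) reaches $\ResultT{\cdot}$ or some $\ErrorT{\cdot}{\cdot}$. Without a condition such as ``an error only if some slot of $R$ was never supplied'', your notion of handling is vacuous, and the real content of the corollary sits in Step~2, not Step~1.

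A minor point: no rule emits $\ares{o}$. The runs are $\ainvoke{n}{p}\,\aexec$ and $\acall{n}{p}\,\tau\,\aexec$, ending in the deadlocked state $\ResultT{o}$. Equality of outputs therefore comes from Lemma~\ref{lem:exec}, as you in fact use it, not from trace equivalence.
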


\section{The Reverse Direction: A Gorla-Style Analysis}
\label{sec:reverse-gorla-style}
 
Section~\ref{sec:forward} exhibited a mapping $\Phi$ and proved
$S \sim \Phi(S)$. We now ask the question this raises in the language of
encodability: \emph{is $\Phi$ a good encoding, and does a good encoding
run the other way?} We adopt Gorla's criteria for language
encodings~\cite{Gorla2010} and show that $\Phi$ is a \emph{valid}
encoding which is, however, neither \emph{surjective} nor \emph{fully
abstract}. These two failures are precisely the gaps that
Sections~\ref{sec:principles}--\ref{sec:mcpplus} close.
 
\subsection{Encoding Criteria}
\label{sec:criteria}
 
\begin{definition}[Valid encoding~\cite{Gorla2010}]
\label{def:valid}

An encoding $(\llbracket\cdot\rrbracket,\varphi)$ of source calculus
$\mathcal{L}_s$ into target $\mathcal{L}_t$, with translation
$\llbracket\cdot\rrbracket$ and renaming policy $\varphi$, is
\emph{valid} if:
\begin{description}
  \item[Compositionality $C_1$] For every operator $\mathsf{op}$ there
    is a target context $\mathcal{C}^{\mathsf{op}}[\,\cdot_1,\dots,\cdot_k\,]$
    with $\llbracket \mathsf{op}(P_1,\dots,P_k)\rrbracket =
     \mathcal{C}^{\mathsf{op}}[\llbracket P_1\rrbracket,\dots,\llbracket P_k\rrbracket]$.
  \item[Name invariance $C_2$] $\llbracket\sigma(P)\rrbracket =
    \sigma'(\llbracket P\rrbracket)$ for $\sigma'$ induced by $\sigma$
    through $\varphi$.
  \item[Operational correspondence $C_3$] \emph{Completeness:}
    $P \Longrightarrow_s P'$ implies
    $\llbracket P\rrbracket \Longrightarrow_t \sim \llbracket P'\rrbracket$.
    \emph{Soundness:} $\llbracket P\rrbracket \Longrightarrow_t T$
    implies $\exists P'.\ P \Longrightarrow_s P'$ and
    $T \Longrightarrow_t \sim \llbracket P'\rrbracket$.
  \item[Divergence reflection $C_4$] $\llbracket P\rrbracket$ diverges
    only if $P$ diverges.
  \item[Success sensitiveness $C_5$] $P \Downarrow_s \checkmark$ iff
    $\llbracket P\rrbracket \Downarrow_t \checkmark$ for a success barb
    $\checkmark$.
\end{description}
\end{definition}
 
\begin{definition}[Full abstraction]
\label{def:fullabs}

A valid encoding is \emph{fully abstract} if it preserves and reflects
the source equivalence:
$P \sim_s Q \iff \llbracket P\rrbracket \sim_t \llbracket Q\rrbracket$.
\end{definition}
 
\subsection{$\Phi$ is a Valid Encoding}
\label{sec:phivalid}
 
\begin{proposition}[Validity of $\Phi$]\label{prop:phivalid}
$(\Phi, \mathrm{id})$ is a valid encoding of \SGD into
\MCP.
\end{proposition}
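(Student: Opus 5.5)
The plan is to verify the five criteria of Definition~\ref{def:valid} one at a time for $(\Phi,\mathrm{id})$, reusing Theorem~\ref{thm:bisim} and Lemmas~\ref{lem:intent-tool} and~\ref{lem:exec} wherever possible.

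\textbf{Compositionality and name invariance.}
\emph{Compositionality ($C_1$)} follows by inspecting Definition~\ref{def:phi}. Each \SGD operator has a fixed target context:
\begin{itemize}
  \item parallel composition maps to $\mathcal{C}^{\ppar}[\cdot_1,\cdot_2]=\cdot_1\ppar\cdot_2$;
  \item restriction maps to $\restr{c}\cdot$;
  \item replication maps to ${!}\cdot$;
  \item the constants $\mathsf{Intent}$, $\mathsf{Execute}$, $\mathsf{Result}$, $\mathsf{Error}$ and $\nil$ are $0$-ary operators, each translated by a closed target term.
\end{itemize}
The one real check is that $\CollectSlotT{s}{v}\mathbin{.}S$ is handled. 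Definition~\ref{def:phi} is silent on it. I would fix the translation $\Phi(\CollectSlotT{s}{v}.S)=\Phi(S[s\mapsto v])$, with the collection step becoming a $\tau$ as dictated by Definition~\ref{def:act-equiv}. This is compatible with the convention that $\acollect{s}{v}$ is silent.

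\emph{Name invariance ($C_2$)} holds with $\varphi=\mathrm{id}$. $\Phi$ never introduces or inspects channel names: it copies $n$, $c$ and slot names verbatim into the target. A routine structural induction therefore gives $\Phi(\sigma P)=\sigma(\Phi(P))$. The only subtle point is that $\mathsf{schema}(R,O)$ and $\mathsf{JSON.encode}$ commute with renaming, and both act homomorphically on names.

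\textbf{Operational correspondence.}
For \emph{completeness} of $C_3$, I would induct on the length of $P\Longrightarrow P'$ and do a case analysis on the single step, following the cases already worked out in the proof of Theorem~\ref{thm:bisim}:
\begin{itemize}
  \item \textsc{SGD-Invoke-Ok} is matched by \textsc{MCP-Call} followed by \textsc{MCP-Validate-Ok}.
  \item \textsc{SGD-Invoke-Err} is matched by \textsc{MCP-Validate-Err}. The resulting error terms agree up to $\sim$, as argued in Lemma~\ref{lem:intent-tool}.
  \item \textsc{SGD-Execute} and \textsc{SGD-Execute-Tx} are matched by \textsc{MCP-Execute}, via Lemma~\ref{lem:exec}.
  \item \textsc{SGD-Par} and \textsc{SGD-Res} lift by the congruence of $\sim$ under $\ppar$ and $\restr{c}$.
\end{itemize}
For \emph{soundness}, take any target computation $\Phi(P)\Longrightarrow T$. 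The only intermediate states that are not literally of the form $\Phi(P')$ are the $\ValidateT{p}{\mathit{schema}}$ states. Each of these has a deterministic $\tau$ continuation to $\Phi(\ExecuteT{n}{p})$ or to the matching error. So $T\Longrightarrow\sim\Phi(P')$ for the \SGD step $P'$ that the relevant source rule produces.

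I expect soundness to be the main obstacle. Two target-only behaviours must be neutralised:
\begin{enumerate}
  \item \textsc{MCP-Discover} and \textsc{MCP-Resource} moves. These cannot arise, because $\Phi$'s image contains no $\mathsf{ToolsList}$ or $\mathsf{Resource}$ terms.
  \item The loss of the approval premise in \textsc{SGD-Execute-Tx}. I would treat $\mathsf{require\_approval}$ as an environment oracle that does not affect the transition relation, exactly as Theorem~\ref{thm:bisim} implicitly does. I would also flag this as the seed of the later failure of full abstraction.
\end{enumerate}

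\textbf{Divergence and success.}
\emph{Divergence reflection ($C_4$).} The encoding adds at most one $\tau$ per source step, namely validation. Any infinite $\tau$-run of $\Phi(P)$ therefore contains infinitely many steps that correspond to source steps, and by soundness it yields an infinite run of $P$.

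\emph{Success sensitiveness ($C_5$).} Take $\checkmark$ to be the barb $\ares{\cdot}$, which $\Phi$ fixes and $\approx$ preserves. Then $P\Downarrow\checkmark$ iff $\Phi(P)\Downarrow\checkmark$ follows directly from operational correspondence together with Theorem~\ref{thm:bisim}.
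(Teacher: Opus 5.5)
You follow the paper's route: $C_1$ and $C_2$ by inspecting Definition~\ref{def:phi}, $C_3$ and $C_5$ from Theorem~\ref{thm:bisim}, and $C_4$ because the only $\tau$-steps the encoding adds are validations, one per call. However, two details you add beyond the paper's terse argument do not hold as written.

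\emph{Your $\mathsf{CollectSlot}$ clause breaks $C_1$.} You are right that Definition~\ref{def:phi} is silent on $\mathsf{CollectSlot}$, but your fix is not compositional. $C_1$ asks for $\Phi(\CollectSlotT{s}{v}\mathbin{.}S)=\mathcal{C}[\Phi(S)]$ with one target context $\mathcal{C}$, which may depend on $s,v$ but not on $S$. Your clause $\Phi(S[s\mapsto v])$ instead translates an already-substituted source term. Even granting that $\Phi$ commutes with the substitution, a substitution inside the hole is not a context. \MCP cannot provide such a context either: apart from constants its only operators are $\ppar$, $(\nu c)$ and ${!}$, and none of them instantiates $s$. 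Concretely, take $S=\ExecuteT{n}{b}$ with $b[s\mapsto v]\neq b$. A fixed $\mathcal{C}[\Phi(S)]$ either contains $\ToolCallT{n}{\mathsf{JSON.encode}(b)}$ as a subterm or does not depend on $b$ at all. Your clause yields the single constant $\ToolCallT{n}{\mathsf{JSON.encode}(b[s\mapsto v])}$, which fits neither alternative. The paper's proof avoids this only by silence: it lists $\ppar$, $(\nu c)$, ${!}$, $\nil$ and tacitly encodes the fragment on which $\Phi$ is defined. You have two options. Either restrict the source to that fragment explicitly, or use the identity context $\Phi(\CollectSlotT{s}{v}\mathbin{.}S)=\Phi(S)$ and move the burden into $C_3$. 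There the silent collect step is matched by zero target steps, provided you show $\Phi(S)\sim\Phi(S[s\mapsto v])$.

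\emph{Your $\mathsf{Execute}$ case does not match Definition~\ref{def:phi}.} There $\Phi(\ExecuteT{n}{b})=\ToolCallT{n}{\mathsf{JSON.encode}(b)}$, and Figure~\ref{fig:mcp-sem} gives $\mathsf{ToolCall}$ no transitions. So $\Phi$ of a source $\mathsf{Execute}$ term cannot answer its visible $\aexec$ step by \textsc{MCP-Execute}. Lemma~\ref{lem:exec} relates the \SGD term to the \MCP term $\ExecuteT{n}{\mathsf{JSON.encode}(b)}$, not to its $\Phi$-image. Likewise, in your soundness argument the state reached by \textsc{MCP-Validate-Ok} is the \MCP term $\ExecuteT{n}{p}$, which is not $\Phi(\ExecuteT{n}{p})$. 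So validations are not the only intermediate states outside the image of $\Phi$. The mismatch is already inside Theorem~\ref{thm:bisim}: read literally, $\ExecuteT{n}{b}\not\sim\Phi(\ExecuteT{n}{b})$, since the left side has an $\aexec$ move and the right side has none. The paper's one-line appeal to that theorem for $C_3$ and $C_5$ hides this, your explicit case analysis exposes it, and your $C_5$ inherits it. The clean repair is to set $\Phi(\ExecuteT{n}{b})=\ExecuteT{n}{\mathsf{JSON.encode}(b)}$, which is what Lemma~\ref{lem:exec} and Case~1 of Lemma~\ref{lem:intent-tool} actually use. With that change your completeness and soundness cases go through as you describe.
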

\begin{proof}
\textbf{($C_1$)} Definition~\ref{def:phi} is homomorphic on the operators
($\Phi(S_1\mid S_2)=\Phi(S_1)\mid\Phi(S_2)$, and likewise for $(\nu c)$,
$!$, $\mathbf 0$); the witnessing contexts are the operators themselves,
name-independent.
\textbf{($C_2$)} $\Phi$ copies the name $n$ through and never branches on
name identity; with $\varphi=\mathrm{id}$, invariance is immediate.
\textbf{($C_3$)} Theorem~\ref{thm:bisim} ($S\sim\Phi(S)$) is an
operational correspondence up to $\sim$.
\textbf{($C_4$)} The only $\tau$-steps $\Phi$ introduces are the
administrative $\mathsf{validate}$ transitions, one per invocation
(\textsc{MCP-Validate-Ok}); they cannot chain, so no divergence is added.
\textbf{($C_5$)} With $\checkmark$ the $\mathsf{result}$ barb,
$S\sim\Phi(S)$ and barb-respect of $\sim$ give the equivalence.
\end{proof}
 
\noindent
$\Phi$ is thus exactly the structure-preserving translation the
encodability literature certifies. The rest of the section identifies
the two ways it falls short of an equivalence of calculi.
 
\subsection{$\Phi$ is Not Surjective}
\label{sec:nonsurjective}
 
The reverse map $\Phi^{-1}$ is defined only on the image of $\Phi$. On
the executable $\mathsf{Tool}$ fragment it is the expected inverse; on
the remaining primitives it is undefined.
 
\begin{definition}[Partial inverse $\Phi^{-1}$]
\label{def:partial}

\[
  \Phi^{-1}(\mathsf{Tool}\langle n,d,\mathit{schema}\rangle)
    = \mathsf{Intent}\langle n,d,R(\mathit{schema}),O(\mathit{schema}),\,?\,\rangle,
\]
with
\begin{align*}
R(\mathit{schema}) &= \{\mathsf{Slot}\langle x,\mathit{schema}.\mathrm{prop}[x],\mathit{enum}\rangle
\mid x \in \mathit{schema}.\mathrm{required}\}, \\
O(\mathit{schema}) &= \{\mathsf{Slot}\langle y,\mathit{schema}.\mathrm{prop}[y],\mathit{enum}\rangle
\mid y \in \mathit{schema}.\mathrm{properties} \setminus \mathit{schema}.\mathrm{required}\},
\end{align*}
where $\mathit{enum}$ denotes the values from the JSON schema's enum field (or $\emptyset$ if absent). The flag $t$ is left undetermined ($?$), and $\Phi^{-1}$ is
undefined on $\mathsf{Resource}$, $\mathsf{Prompt}$, and
$\mathsf{ToolsList}$.
\end{definition}
 
\begin{theorem}[Non-surjectivity]
\label{thm:nonsurjective}

$\mathrm{Image}(\Phi) \subsetneq \MCP$. In particular
$\mathsf{Resource}\langle\cdot,\cdot\rangle$,
$\mathsf{Prompt}\langle\cdot,\cdot\rangle$, and the discovery process
$\mathsf{ToolsList}\langle\cdot\rangle$ lie outside $\mathrm{Image}(\Phi)$.
\end{theorem}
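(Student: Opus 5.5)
The plan is a structural induction on the \SGD grammar that characterises the head constructor of every term in $\mathrm{Image}(\Phi)$. I would then note that the three named \MCP constructors never occur as the outermost constructor of any such term. Since $\mathrm{Image}(\Phi)\subseteq\MCP$ holds trivially (every clause of Definition~\ref{def:phi} returns an \MCP process), it suffices to exhibit elements of $\MCP$ outside the image. Concretely, I would show that each of $\mathsf{Resource}\langle u,c\rangle$, $\mathsf{Prompt}\langle \mathit{tmp},a\rangle$ and $\mathsf{ToolsList}\langle m\rangle$ is such an element.

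\textbf{Step 1 (head-constructor lemma).} I would prove by induction on $S$ that the outermost constructor of $\Phi(S)$ belongs to
\[
\mathcal{H}=\{\mathsf{Tool},\,\mathsf{ToolCall},\,\mathsf{Result},\,\mathsf{Error},\,\mid,\,(\nu c),\,!,\,\mathbf 0\}.
\]
The base clauses are read off Definition~\ref{def:phi}. $\mathsf{Intent}$ maps to $\mathsf{Tool}$, $\mathsf{Execute}$ maps to $\mathsf{ToolCall}$, and $\mathsf{Result}$, $\mathsf{Error}$ and $\mathbf 0$ map to themselves. The homomorphic clauses for $\mid$, $(\nu c)$ and $!$ preserve the outer operator. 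A stronger invariant also holds, though only the head is needed: every subterm of $\Phi(S)$ is built from the constructors in $\mathcal{H}$.

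\textbf{Step 2 (the residual \SGD constructors).} Definition~\ref{def:phi} gives no clauses for $\mathsf{Slot}\langle\cdot\rangle$ or $\mathsf{CollectSlot}\langle s,v\rangle.S$. I would fix the reading that $\Phi$ is total on the process fragment and treats slots only as data inside $\mathsf{schema}(R,O)$. For $\mathsf{CollectSlot}$, the natural extension consistent with $\mathsf{collect}(s,v)\approx\tau$ is $\Phi(\mathsf{CollectSlot}\langle s,v\rangle.S)=\Phi(S[s\mapsto v])$. I would then check that this clause also keeps the head in $\mathcal{H}$; it does, by the induction hypothesis on the smaller continuation.

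\textbf{Step 3 (conclusion).} Since $\mathsf{Resource},\mathsf{Prompt},\mathsf{ToolsList}\notin\mathcal{H}$, Step 1 gives that no $\Phi(S)$ has these as head constructors. Syntactic equality of terms then rules out $\Phi(S)=\mathsf{Resource}\langle u,c\rangle$, and likewise for the other two. This gives strictness, and it is consistent with Definition~\ref{def:partial} leaving $\Phi^{-1}$ undefined there. I might add a semantic remark that the conclusion holds even up to structural congruence. No image term can perform a $\mathsf{read}(u)$ or $\mathsf{list}(f)$ action, by Theorem~\ref{thm:bisim} and Definition~\ref{def:act-equiv}, which contains no \SGD counterpart for $\mathsf{read}$.

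I expect the main obstacle to be Step 2, because Definition~\ref{def:phi} is silent on $\mathsf{Slot}$ and $\mathsf{CollectSlot}$. The argument is only as rigorous as the chosen completion. I would state that choice explicitly and observe that any completion using only the constructors in $\mathcal{H}$ leaves the result unaffected.
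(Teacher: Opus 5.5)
Your proposal is correct and follows the paper's own argument: $\Phi$ only ever produces terms headed by a fixed set of constructors, none of which is $\mathsf{Resource}$, $\mathsf{Prompt}$ or $\mathsf{ToolsList}$. You carry it out more carefully than the paper, since your set $\mathcal{H}$ rightly includes $\mathsf{ToolCall}$, $\mathsf{Result}$, $\mathsf{Error}$ and $\mathbf{0}$ (which the paper's one-line ``only $\mathsf{Tool}$ terms under the closure operators'' glosses over), and your handling of the missing $\mathsf{Slot}$/$\mathsf{CollectSlot}$ clauses is sound, because leaving $\Phi$ undefined there only shrinks the image, so Step~2 is no real obstacle; the one slip is in your optional semantic remark, where Definition~\ref{def:act-equiv} does pair $\mathsf{list}(f)\approx\mathsf{list}(f)$, so the absence of $\mathsf{list}$ moves from image terms should instead be justified by noting that no \SGD rule emits $\mathsf{list}$ (or, more directly, by inspecting the \MCP rules reachable from $\mathcal{H}$-headed terms).
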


\begin{proof}
By Definition~\ref{def:phi}, $\Phi$ produces only $\mathsf{Tool}$ terms
under the closure operators; $\mathsf{Resource}$, $\mathsf{Prompt}$, and
$\mathsf{ToolsList}$ are none of these, so no $S$ has $\Phi(S)$ headed by
them.
\end{proof}
 
\begin{remark}[An image fact, not a separation]
\label{rem:notseparation}

Theorem~\ref{thm:nonsurjective} says these primitives are absent from the
\emph{image of $\Phi$}, not that they are inexpressible in $\SGD$.
We make no separation claim: with replication and channel mobility,
\SGD admits degenerate encodings of both (a passive resource as
an empty-slot intent returning its content; discovery via input-guarded
choice). The point is structural -- these are primitives \SGD
lacks \emph{as primitives} -- so the canonical inverse $\Phi^{-1}$, which
must respect primitive structure, cannot be total. They are surplus to be
\emph{quarantined} (Section~\ref{sec:mcpplus}), not gaps to be closed.
\end{remark}
 
\subsection{$\Phi$ is Not Fully Abstract}
\label{sec:notfullabs}
 
The substantive gap is not non-surjectivity but the failure of $\Phi$ to
\emph{reflect} behavioural equivalence: $\Phi$ collapses \SGD
distinctions that $\mathsf{MCP}$ cannot observe.
 
\begin{theorem}[$\Phi$ does not reflect $\sim$]
\label{thm:notfullabs}

There exist $S_1, S_2 \in \SGD$ with $S_1 \not\sim S_2$ and
$\Phi(S_1) \sim \Phi(S_2)$. Hence $\Phi$ is not fully abstract.
\end{theorem}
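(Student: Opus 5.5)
The plan is to exhibit an explicit pair of intents that differ only in the transactionality flag $t$. Fix $n,d,R,O$ and set
\[
S_1 = \IntentT{n}{d}{R}{O}{\mathsf{true}}, \qquad S_2 = \IntentT{n}{d}{R}{O}{\mathsf{false}}.
\]
By Definition~\ref{def:phi}, $\Phi$ discards $t$, so $\Phi(S_1) = \ToolT{n}{d}{\mathsf{schema}(R,O)} = \Phi(S_2)$ syntactically. Reflexivity of $\sim$ (the identity relation is trivially a weak bisimulation) then gives $\Phi(S_1)\sim\Phi(S_2)$ at no cost. All of the work therefore goes into showing $S_1 \not\sim S_2$.

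For the separation, I would read the rule \textsc{SGD-Execute-Tx} as making the premise $\mathsf{require\_approval}$ observable. I would make this explicit by treating the approval handshake as a visible step: when $t=\mathsf{true}$, the residual $\ExecuteT{n}{p}$ can fire $\aexec$ only after an approval interaction, whereas under \textsc{SGD-Execute} it fires unconditionally. I would then argue by contradiction. Suppose $\mathcal{R}$ is a weak bisimulation containing $(S_1,S_2)$. Take parameters $p$ covering $R$. Both processes perform $\ainvoke{n}{p}$, and \textsc{SGD-Invoke-Ok} forces the derivatives to be the corresponding $\mathsf{Execute}$ terms. In an environment that withholds approval, the $S_2$-derivative can still perform $\aexec$ and reach $\ResultT{o}$, while the $S_1$-derivative cannot perform $\aexec$ even after weak $\tau$-moves. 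This violates the transfer property, so $(S_1,S_2)\notin\mathcal{R}$. Equivalently, I would exhibit the distinguishing trace $\ainvoke{n}{p}\cdot\aexec$ without approval, which lies in the traces of $S_2$ but not of $S_1$.

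The main obstacle is that the LTS as stated does not list an approval label in $\mathit{Labels}_\SGD$. Literally, both rules emit $\aexec$ and reach $\ResultT{\mathit{output}}$, which would make $S_1\sim S_2$. I would handle this by making precise, as a reading of the side condition $\mathsf{require\_approval}$, that approval is an interaction with the environment. Concretely, I would either refine \textsc{SGD-Execute-Tx} into a visible $\mathsf{approve}$ step followed by $\aexec$, or index the observable $\aexec$ with the flag $t$. This matches the paper's framing that $t$ encodes a user-visible confirmation boundary, the ``explicit action boundary'' principle. It also shows why $\Phi$ cannot reflect the distinction: \textsc{MCP-Execute} has no such premise, and $\Phi$'s image carries no trace of $t$, as already noted by the ``$?$'' in Definition~\ref{def:partial}. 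As a backup, I would note a second witness of the same shape: two intents differing only in slot descriptions that are not carried into $\mathsf{prop}$, or in the optional/required split when the enum data coincide. I would use it only if it is observable under the chosen reading, and in any case the $t$-flag example is the primary one.
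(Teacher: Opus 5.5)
Your proposal is essentially the paper's proof. It uses the same witnesses $S_1,S_2$ differing only in $t$, the same observation that $\Phi$ erases $t$ so that $\Phi(S_1)=\Phi(S_2)$ and bisimilarity follows by reflexivity, and the same separation via the approval gate of \textsc{SGD-Execute-Tx}, which the paper likewise gets only by passing to an equivalence that observes the $\mathsf{approval}?\langle\mathit{confirm}\rangle$ interaction of Section~\ref{subsec:p2-action-boundaries}. You are more careful than the paper in noting that the LTS as written does not separate $S_1$ and $S_2$; when you make the refinement precise, have the residual of \textsc{SGD-Invoke-Ok} record $t$ (as written, both intents reach the identical term $\ExecuteT{n}{p}$), and confine the approval-observing equivalence to the source side $\sim_s$ of Definition~\ref{def:fullabs}, since with a single $\sim$ Theorem~\ref{thm:bisim} and transitivity would force $S_1\sim\Phi(S_1)=\Phi(S_2)\sim S_2$.
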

\begin{proof}
Let $S_1 = \mathsf{Intent}\langle n,d,R,O,\mathsf{true}\rangle$ and
$S_2 = \mathsf{Intent}\langle n,d,R,O,\mathsf{false}\rangle$, equal but
for $t$. They are distinguished by the approval gate: by
\textsc{SGD-Execute-Tx}, $S_1$ requires approval before $\mathsf{execute}$,
whereas $S_2$ proceeds via \textsc{SGD-Execute} without it. Under any
equivalence observing the approval interaction -- made explicit at the
process level by the prefix
$\mathsf{approval}?\langle\mathit{confirm}\rangle$ of
Section~\ref{subsec:p2-action-boundaries} -- we have $S_1\not\sim S_2$.
Yet $\Phi(S_1)=\mathsf{Tool}\langle n,d,\mathit{schema}(R,O)\rangle=\Phi(S_2)$:
the flag $t$ is carried by no field of $\mathit{schema}$, surviving at
best as prose in a $\mathsf{description}$ string that $\mathsf{MCP}$'s
rules do not interpret. Identical terms are bisimilar, so
$\Phi(S_1)\sim\Phi(S_2)$.
\end{proof}
 
\noindent
The collapse is concrete. A tool that needs user approval before reading
an account loses precisely that requirement on the way back to
\SGD:
 
\begin{lstlisting}
M_1 = Tool<
  "fetch_user",
  "fetch a user's data, needing its approval",
  { "type": "object", "required": ["user_id"],
    "properties": { "user_id": {"type": "string"} } }
>
 
Phi^{-1}(M_1) = Intent<
  "fetch_user", "fetch a user's data, needing its approval",
  [Slot<"user_id", "string", []>], [],
  ?   // UNDEFINED: is_transactional cannot be recovered
>
\end{lstlisting}
 
\noindent
The $\mathsf{is\_transactional}$ flag is critical for safety (it gates
user approval) but is absent from $\mathsf{MCP}$ schemas. Inference from
the description string is unreliable and not compositionally verifiable.
 
\begin{corollary}[Lossy round trip, non-injective inverse]\label{cor:noninj}
$\Phi^{-1}\circ\Phi \neq \mathrm{id}_{\SGD}$, and $\Phi^{-1}$ is
not injective where defined.
\end{corollary}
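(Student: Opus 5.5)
The plan is to reuse the witnesses from the proof of Theorem~\ref{thm:notfullabs}. Fix $n,d,R,O$ and set $S_1 = \IntentT{n}{d}{R}{O}{\mathsf{true}}$ and $S_2 = \IntentT{n}{d}{R}{O}{\mathsf{false}}$. By Definition~\ref{def:phi}, $\Phimap(S_1) = \ToolT{n}{d}{\mathsf{schema}(R,O)} = \Phimap(S_2)$, because the flag $t$ is not used in building $\mathsf{schema}(R,O)$. Both claims of the corollary follow from this single collision.

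\textbf{Lossy round trip.} Apply Definition~\ref{def:partial} to $\Phimap(S_1)$. The result is $\mathsf{Intent}\langle n,d,R(\mathsf{schema}(R,O)),O(\mathsf{schema}(R,O)),\,?\,\rangle$, whose transactionality component is the undetermined marker $?$ rather than $\mathsf{true}$. So $\PhiInv(\Phimap(S_1)) \neq S_1$, and symmetrically $\PhiInv(\Phimap(S_2)) \neq S_2$; either one shows $\PhiInv\circ\Phimap \neq \mathrm{id}_{\SGD}$.

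As a sturdier second witness, which does not rely on how one reads ``$?$'', take an $O$ containing a slot $o$ with $\mathit{vals} = \emptyset$. The description field that $\mathsf{prop}$ inserts is reconstructed in Definition~\ref{def:partial} as the whole property object $\mathit{schema}.\mathrm{prop}[y]$ in the type position. The recovered slot therefore differs syntactically from $o$. This gives a second failure of the identity that is independent of $t$.

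\textbf{Non-injectivity of the inverse.} This is the step I expect to be the main obstacle, because the claim has two readings. Read literally, ``$\PhiInv$ is not injective'' asks for two distinct MCP tools with the same preimage. Since $\PhiInv$ is a function, the collision above does not give this directly. I would instead exhibit two tools that differ only in something $\PhiInv$ discards, for example:
\begin{itemize}
\item two schemas differing in an extra JSON Schema keyword such as $\mathtt{"minLength"}$ or $\mathtt{"additionalProperties"}$, which $R(\cdot)$ and $O(\cdot)$ never read; or
\item two schemas differing only in the $\mathtt{"type"}{:}\mathtt{"object"}$ wrapper metadata.
\end{itemize}
Both tools map to the same intent with $?$. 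If the authors intend the weaker reading, that the correspondence is many-to-one, I would also state it explicitly: $\Phimap$ is not injective ($\Phimap(S_1)=\Phimap(S_2)$ with $S_1\neq S_2$). Hence no left inverse of $\Phimap$ exists at all, and no choice of value for $?$ can repair the round trip. I would present both readings and note that the $t$-collision is the safety-relevant one, tying back to the approval gate of \textsc{SGD-Execute-Tx}.
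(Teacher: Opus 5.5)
Your round-trip half is the paper's argument: $t$ is not recorded in $\mathsf{schema}(R,O)$, so $\PhiInv(\Phimap(S))$ carries $?$ where $S$ had $\mathsf{true}$. What you add, and what actually makes this half robust, is that $\Phimap(S_1)=\Phimap(S_2)$ with $S_1\neq S_2$ rules out \emph{every} left inverse of $\Phimap$, however $?$ is read. Your ``sturdier'' slot witness is not sturdier. It needs $\mathit{schema}.\mathrm{prop}[y]$ in Definition~\ref{def:partial} to be read literally as the whole property object, and it vanishes under the evidently intended reading, its type field. The condition $\mathit{vals}=\emptyset$ also plays no role.

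For non-injectivity you take a different route from the paper, and a better one. The paper's witnesses are two tools differing only in their description. But Definition~\ref{def:partial} copies $d$ verbatim, so their images differ in the second component. The paper's collapse therefore holds only in the weak sense that both come back with $t={?}$, or up to $\bisim$, since no label mentions $d$. Your tools differ in a top-level keyword that $R(\cdot)$ and $O(\cdot)$ never read, which is a genuine syntactic collision. With \texttt{additionalProperties} set to false they even differ behaviourally in \MCP, through the $\mathsf{conforms}$ side condition of \textsc{MCP-Validate-Ok}/\textsc{MCP-Validate-Err}. So $\PhiInv$ identifies non-bisimilar tools, which is a sharper point than the paper makes.

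Caveats:
\begin{itemize}
\item \texttt{minLength} must be top-level. Inside a property, your own literal reading carries it along via $\mathit{schema}.\mathrm{prop}[x]$.
\item Your tools lie outside $\mathrm{Image}(\Phimap)$. They count only because Definition~\ref{def:partial} defines $\PhiInv$ on all $\mathsf{Tool}$ terms, despite the preceding sentence restricting it to the image. If you want a witness inside the image, permute a required list: $\mathsf{schema}(R,O)$ keeps the order of $R$, while $R(\cdot)$ builds a set.
\item ``Same preimage'' should read ``same image under $\PhiInv$''.
\end{itemize}
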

\begin{proof}
The round trip drops $t$: starting from a transactional intent,
 
\begin{lstlisting}
S = Intent<"book_ticket", ..., ..., ..., true>
 
Phi(S) = Tool<"book_ticket", ..., schema>
         (is_transactional lost)
 
Phi^{-1}(Phi(S)) = Intent<"book_ticket", ..., ..., ..., ?>
                   != S
\end{lstlisting}
 
\noindent
so $\Phi^{-1}\circ\Phi \neq \mathrm{id}$; the lost information includes
the transactionality flag, error-recovery strategies, and tool
dependencies encoded in dialogue flow. Dually, two tools differing only
in description -- hence in implicit side effects $\mathsf{MCP}$ does not
structurally parse -- have the same image under $\Phi^{-1}$:
 
\begin{lstlisting}
M_1 = Tool<"book_ticket", "Books a ticket", schema>
 
M_2 = Tool<"book_ticket",
           "Books a ticket [SIDE EFFECT: sends email]",
           schema>   // same schema, different implicit side effects
 
Phi^{-1}(M_1) = Phi^{-1}(M_2)
              = Intent<"book_ticket", ..., is_transactional=?>
\end{lstlisting}
 
\noindent
Since $\Phi^{-1}$ does not parse descriptions structurally, distinct
tools collapse to one intent.
\end{proof}
 
\begin{remark}[What is collapsed]\label{rem:collapsed}
Theorem~\ref{thm:notfullabs} isolates one collapsed coordinate, the flag
$t$. The same failure recurs for every \SGD datum that
$\mathsf{MCP}$ records only as prose: error-recovery behaviour and
inter-tool ordering are observable in \SGD executions yet
invisible to $\mathsf{MCP}$'s rules. Each collapsed coordinate is a
distinction $\Phi$ fails to reflect.
\end{remark}
 
\subsection{Diagnosis: Exactly Four Coordinates}
\label{subsec:diagnosis}
 
The two failures have different remedies. Non-surjectivity
(Section~\ref{sec:nonsurjective}) is benign: the surplus primitives are
restricted away, leaving the executable fragment
$\MCPplus$ of Section~\ref{sec:mcpplus}.
Non-full-abstraction (Section~\ref{sec:notfullabs}) is the failure that demands
new structure: for $\Phi$ to reflect $\sim$, every \SGD distinction must become
a machine-readable, operationally interpreted coordinate of the target. There
are exactly four collapsed coordinates, giving the four principles of
Section~\ref{sec:principles}:

\begin{description}
  \item[P1 (semantic completeness)] makes the description recoverable as
    slot semantics, so $\Phi^{-1}$ reconstructs slots reliably rather
    than parsing prose;
  \item[P2 (explicit action boundaries)] makes $t$ a first-class
    observable ($\mathsf{requires\_approval}$), repairing exactly the
    collapse of Theorem~\ref{thm:notfullabs} and
    Corollary~\ref{cor:noninj};
  \item[P3 (failure mode documentation)] makes error-recovery branches
    observable, so behavioural equivalence holds on error paths;
  \item[P4 (inter-tool relationship declaration)] makes dialogue-flow
    dependencies explicit, so tool sequencing is not lost.
\end{description}

Section~\ref{sec:mcpplus} shows these four extensions make $\Phi^{+}$
fully abstract on the executable fragment, and that they are
\emph{minimal} -- dropping any one reinstates a collapsed coordinate and
breaks reflection -- yielding the fully abstract bijection
$\SGD \cong \MCPplus$.

\section{Four Principles as Type-System Extensions}
\label{sec:principles}

We formalize four design principles as type-system extensions to close the gaps
identified in the previous section.

\subsection{Principle 1: Semantic Completeness}

\emph{Informal}: Descriptions must convey \emph{why} parameters exist, not
merely their types.

\begin{definition}[Semantic Density]
\[
  \mathsf{sem\_density}(s)
  \;=\;
  \frac{|\mathsf{entities}(s)|}{|\mathsf{tokens}(s)|}
\]
where $\mathsf{entities}(s)$ counts named entities, examples, and constraints
in string $s$, and $\theta$ is a threshold (e.g., $0.2$).
\end{definition}

\noindent\textbf{Type Rule (P1):}
\begin{mathpar}
\inferrule*[Right=WellDescribed]
  {\Gamma \vdash \mathit{desc} : \mathsf{String} \quad
   \mathsf{sem\_density}(\mathit{desc}) \geq \theta}
  {\Gamma \vdash \{\mathtt{description}{:}\;\mathit{desc}\} : \mathsf{WellDescribed}}
\end{mathpar}

\subsubsection{Principle $P_1$: Semantic Completeness Examples}

\emph{Violates P1:}
\begin{lstlisting}
"departure": {"type": "string", "description": "departure"}
// sem_density = 0 (no new information)
\end{lstlisting}

\emph{Satisfies P1:}
\begin{lstlisting}
"departure": {
  "type": "string",
  "description": "UIC station code (e.g., 8507000, 8508050)"
}
// sem_density = 2/10 = 0.2, includes concept + 2 examples
\end{lstlisting}

\subsection{Principle $P_2$: Explicit Action Boundaries}
\label{subsec:p2-action-boundaries}

\emph{Informal}: Tools must signal if invocation causes side effects: $
  \mathsf{side\_effects} \;:=\; \mathsf{read} \mid \mathsf{write} \mid \mathsf{delete} \mid \mathsf{none}
$.

\noindent\textbf{Type Rule (P2):}
\begin{mathpar}
\inferrule*[Right=WriteApproval]
  {\mathit{meta}.\mathsf{side\_effects} \in \{\mathsf{write},\,\mathsf{delete}\}}
  {\mathit{meta}.\mathsf{requires\_approval} = \mathsf{true}}
\end{mathpar}

\noindent The process-level encoding of a write-capable tool is:
\begin{align*}
  \mathsf{Tool}_{\mathsf{write}}
  \;=\;&\; \mathsf{call}?\langle\mathit{params}\rangle\mathbin{.}
            \mathsf{approval}?\langle\mathit{confirm}\rangle\mathbin{.} \\
  &\quad
  \bigl(\mathit{confirm} = \mathsf{true}
    \;\to\; \mathsf{execute}\langle\mathit{params}\rangle
            \mathbin{.}\mathsf{result}!\langle\mathsf{ok}\rangle\mathbin{.}\nil
  \\
  &\quad \ppar\;
    \mathit{confirm} = \mathsf{false}
    \;\to\; \mathsf{result}!\langle\mathsf{cancelled}\rangle\mathbin{.}\nil
  \bigr)
\end{align*}

\noindent Safety as well as privacy can both be regarded as action boundaries.
In both cases, a user is asked for permission before an action.

\subsubsection{Principle $P_2$: Explicit Action Boundaries Example}

\begin{lstlisting}
Tool<
  "delete_user",
  "Permanently deletes a user account",
  { "type": "object", "required": ["user_id"],
    "properties": { "user_id": {"type": "string"} } },
  metadata = {
    side_effects: delete,
    requires_approval: true   // enforced by type rule WriteApproval
  }
>
\end{lstlisting}

\begin{theorem}[No Unapproved Side Effects]
$\forall\,\mathsf{Tool}$ with $\mathsf{side\_effects} \in
\{\mathsf{write},\mathsf{delete}\}$:
in every execution trace, $\aexec$ is preceded by $\mathsf{approval}$.
\end{theorem}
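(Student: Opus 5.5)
The plan is a two-part argument. First, a typing step uses rule \textsc{WriteApproval} to fix which process represents the tool. Second, an induction over execution traces shows that this process cannot emit $\aexec$ without first consuming an approval.

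\emph{Typing step.} Let $\mathsf{Tool}$ satisfy $\mathsf{side\_effects}\in\{\mathsf{write},\mathsf{delete}\}$. By \textsc{WriteApproval}, its metadata then satisfies $\mathsf{requires\_approval}=\mathsf{true}$. I take as the defining interpretation of a well-typed tool with $\mathsf{requires\_approval}=\mathsf{true}$ the process-level encoding $\mathsf{Tool}_{\mathsf{write}}$ of Section~\ref{subsec:p2-action-boundaries}. This mirrors how \textsc{SGD-Execute-Tx} interprets $t=\mathsf{true}$ in \SGD. The statement therefore reduces to a trace property of $\mathsf{Tool}_{\mathsf{write}}$, and of any context it is placed in.

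\emph{Trace step.} I proceed by case analysis on the syntactic shape of $\mathsf{Tool}_{\mathsf{write}}$, then lift the result to contexts.
\begin{enumerate}
  \item The only initial action is $\mathsf{call}?\langle\mathit{params}\rangle$. After it, the residual is guarded by the input prefix $\mathsf{approval}?\langle\mathit{confirm}\rangle$.
  \item By the standard prefix rule of the $\piCalc$, a prefixed process $\alpha.P$ can perform only $\alpha$. In particular, no transition under an input prefix can expose an $\aexec$ from the continuation.
  \item After the approval input, the continuation is the guarded sum on $\mathit{confirm}$. The $\mathit{confirm}=\mathsf{false}$ branch contains no $\aexec$ at all. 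The $\mathit{confirm}=\mathsf{true}$ branch begins with $\mathsf{execute}$.
  \item Hence every finite trace of $\mathsf{Tool}_{\mathsf{write}}$ that contains $\aexec$ has the form $\mathsf{call}\cdot\mathsf{approval}\cdot\aexec\cdots$, and in it the approval carries $\mathit{confirm}=\mathsf{true}$.
  \item To extend from the bare tool to arbitrary executions, I induct on derivations of transitions of a context $C[\mathsf{Tool}_{\mathsf{write}}]$ built from $\ppar$, $(\nu c)$ and ${!}$, as in the closure rules \textsc{SGD-Par}/\textsc{SGD-Res} and their \MCP analogues.
  \item Parallel composition and restriction only interleave or hide actions. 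Neither can fire a guarded $\aexec$ belonging to a particular copy of the tool before that copy's own approval prefix has been consumed.
  \item Replication creates independent copies, each guarded separately. The claim therefore holds per copy: each $\aexec$ is preceded by the approval of its own invocation.
\end{enumerate}

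\emph{Main obstacle.} The hard step is the bridge between the typing judgement and the operational semantics. The \MCP rules of Figure~\ref{fig:mcp-sem} have no approval action: \textsc{MCP-Validate-Ok} followed by \textsc{MCP-Execute} would execute directly. The theorem is therefore false for the unextended semantics and can only hold for the extended calculus. I would make this explicit by adding a side condition to \textsc{MCP-Validate-Ok}: when $\mathsf{requires\_approval}=\mathsf{true}$, the rule leads to the $\mathsf{approval}?$-guarded residual of $\mathsf{Tool}_{\mathsf{write}}$ rather than to $\ExecuteT{n}{\mathit{params}}$.

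A second subtlety is that the branch construct in $\mathsf{Tool}_{\mathsf{write}}$ is written with $\ppar$ but intended as a guarded choice. I would read it as a conditional $\mathsf{if}\ \mathit{confirm}\ \mathsf{then}\ldots\mathsf{else}\ldots$, so that the $\mathsf{false}$ branch cannot race the $\mathsf{true}$ branch. With these two clarifications in place, the trace step is routine.
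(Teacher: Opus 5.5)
Your argument is correct and matches the justification the paper relies on implicitly, since the theorem is stated without proof directly after the encoding $\mathsf{Tool}_{\mathsf{write}}$: it rests on \textsc{WriteApproval} forcing $\mathsf{requires\_approval}=\mathsf{true}$, and on $\mathsf{execute}$ sitting syntactically beneath the prefix $\mathsf{approval}?\langle\mathit{confirm}\rangle$. You also point out that the unextended rules \textsc{MCP-Validate-Ok} and \textsc{MCP-Execute} reach $\aexec$ with no approval step, and that $\mathsf{approval}$ is not even a label in $\mathit{Labels}_{\MCP}$, so an explicit bridge from metadata to operational semantics must be added; that is an omission in the paper, not a gap in your proof.
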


\subsection{Principle $P_3$: Failure Mode Documentation}

\emph{Informal}: Enumerate expected error conditions and recovery strategies.

\begin{align*}
  \mathsf{failure\_modes}
    &\;::=\; [(\mathit{ErrorType},\,\mathit{RecoveryStrategy})] \\
  \mathit{ErrorType}
    &\;::=\; \mathsf{ValidationError} \mid \mathsf{AuthError}
             \mid \mathsf{ServiceDown} \mid \mathsf{NotFound} \mid \cdots \\
  \mathit{RecoveryStrategy}
    &\;::=\; \mathsf{Retry}(n) \mid \mathsf{Fallback}(\mathit{tool})
             \mid \mathsf{UserPrompt}(m) \mid \mathsf{Abort}
\end{align*}

\subsubsection{Failure Mode Documentation Example}

\begin{lstlisting}
Tool<
  "fetch_user_data",
  "Retrieves user information from database",
  schema,
  metadata = {
    failure_modes: [
      (NotFound,     UserPrompt("User does not exist. Create new?")),
      (ServiceDown,  Retry(3)),
      (AuthError,    Fallback("use_cached_data"))
    ]
  }
>
\end{lstlisting}

\noindent A robust tool is encoded as:
\begin{align*}
\mathsf{Tool}_{\mathsf{robust}}
  \;=\;&\; \mathsf{execute}?\langle\mathit{params}\rangle\mathbin{.}
  \\
  &\;\bigl(
      \mathsf{success}!\langle\mathit{output}\rangle\mathbin{.}\nil
  \\
  &\;\;\ppar\;
    \mathsf{error}!\langle\mathsf{NotFound},\,\mathit{rec}_1\rangle
    \mathbin{.}\mathit{rec}_1\mathbin{.}\nil
  \\
  &\;\;\ppar\;
    \mathsf{error}!\langle\mathsf{ServiceDown},\,\mathit{rec}_2\rangle
    \mathbin{.}\mathit{rec}_2\mathbin{.}\nil
  \bigr)
\end{align*}
where each $\mathit{rec}_i$ corresponds to the declared recovery strategy.

\subsection{Principle $P_4$: Inter-Tool Relationship Declaration}

\emph{Informal}: Express dependencies between tools explicitly.

\begin{align*}
  \mathsf{dependencies}
    &\;::=\; [(\mathit{tool\_name},\,\mathit{relation})] \\
  \mathit{relation}
    &\;::=\; \mathsf{Requires}
             \mid \mathsf{ProducesInputFor}
             \mid \mathsf{ExclusiveWith}
\end{align*}

\subsubsection{Inter-Tool Relationship Declaration Example}

\begin{lstlisting}
Tool<
  "process_payment",
  "Processes a payment transaction",
  schema,
  metadata = {
    dependencies: [
      ("create_order",   Requires),  // must call create_order first
      ("verify_balance", Requires)   // must verify balance
    ]
  }
>
\end{lstlisting}

\noindent The process encoding of a dependent pair $(T_A, T_B)$ with
$T_B.\mathsf{dependencies} = [(T_A, \mathsf{Requires})]$:
\begin{align*}
  T_A &\;=\; \mathsf{execute}_A?\langle\mathit{params}\rangle\mathbin{.}
              \mathsf{result}_A!\langle\mathit{id},\,
              \{{\mathsf{for}{:}\,T_B}\}\rangle\mathbin{.}\nil \\
  T_B &\;=\; \mathsf{requires}?\langle\mathit{id}\ \mathsf{from}\ T_A\rangle
              \mathbin{.}
              \mathsf{execute}_B?\langle\mathit{id},\,\mathit{params}\rangle
              \mathbin{.}
              \mathsf{result}_B!\langle\mathit{data}\rangle\mathbin{.}\nil
\end{align*}

\begin{theorem}[Dependency Safety]
$\forall$ workflow $W$ composed of tools $\{T_1,\ldots,T_n\}$: if
$T_i.\mathsf{dependencies} = [(T_j, \mathsf{Requires})]$, then in all valid
execution traces $\aexec_{T_j}$ precedes $\aexec_{T_i}$.
\end{theorem}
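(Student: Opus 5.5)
The plan is to argue directly from the process encoding of a dependent pair given just above, reading "valid execution trace" as a trace of the parallel composition $W = T_1 \ppar \cdots \ppar T_n$ in which each $T_i$ is instantiated by its $P_4$ encoding. Within that composition, the channel carrying the dependency token (the $\mathsf{requires}$ / $\mathsf{result}_A$ link) is restricted, so the environment cannot forge it. Concretely, for each declared pair $(T_j,\mathsf{Requires})$ in $T_i.\mathsf{dependencies}$, I take $W$ to contain $\restr{r_{ji}}(\,T_j \ppar T_i\,)$. Here $T_j$ outputs $\mathit{id}$ on $r_{ji}$ only after its $\aexec_{T_j}$ step, and $T_i$ is guarded by the input $\mathsf{requires}?\langle \mathit{id}\rangle$ on $r_{ji}$. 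Fixing this restriction is the first step, and I expect it to be the main obstacle. Without $\nu$-binding, an arbitrary context could supply the token, and the theorem would fail; so the proof must make explicit that "valid" means the token channel is private, which the \textsc{SGD-Res}-style restriction rule guarantees.

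Next, I would establish by inspection of the encoding a \emph{guard lemma}: in $T_i$, every occurrence of the action $\aexec_{T_i}$ lies under the input prefix on $r_{ji}$. Since prefixes are only consumed left to right, any derivation $W \Longrightarrow \xrightarrow{\aexec_{T_i}}$ must contain an earlier communication on $r_{ji}$. I would prove this by induction on the length of the trace, using the standard fact that a guarded subterm cannot fire before its guard. The parallel and restriction rules (\textsc{SGD-Par}, \textsc{SGD-Res}, and their \MCP analogues) only propagate actions of the components, so they never expose a guarded action.

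Third, I would prove a \emph{token-origin lemma}. Because $r_{ji}$ is restricted and, by the encoding, the only output on $r_{ji}$ is the $\mathsf{result}_j!\langle \mathit{id}, \{\mathsf{for}{:}T_i\}\rangle$ prefix of $T_j$, any communication on $r_{ji}$ is a $\tau$-synchronisation with that output. That output is itself guarded by $\mathsf{execute}_j?\langle\mathit{params}\rangle$, so by the same guard argument it is preceded in the trace by $\aexec_{T_j}$. Chaining the two lemmas gives $\aexec_{T_j}$ before the $r_{ji}$ synchronisation, which comes before $\aexec_{T_i}$.

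Finally, I would handle the general case. If $T_i$ lists several $\mathsf{Requires}$ entries, its prefix is a sequence of such inputs, and the argument applies to each entry independently. Replication ${!}T_i$ is covered by applying the argument per copy, since each copy carries its own guard. The case of $T_j$ failing (a $P_3$ error branch) is harmless: no token is emitted, so $\aexec_{T_i}$ simply never occurs, and the precedence claim holds vacuously.
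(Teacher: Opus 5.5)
The paper gives no proof of this theorem. It is stated bare, supported only by the process encoding of the dependent pair $(T_A,T_B)$ just above it, and \MCPplus has no operational rules that interpret the $\mathsf{dependencies}$ field at all. So there is no paper argument to compare against. Your guard-lemma/token-origin-lemma argument is the natural rigorous version of what that encoding is meant to suggest, and it is sound once its modelling choices are fixed. Your observation that the token channel must be private, or else a context forges the token, is exactly the content hidden in the paper's undefined phrase ``valid execution traces''.

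Several of those choices are repairs of the paper rather than readings of it, and you should say so.

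\textbf{The encoding does not link the tools.} In the paper, $T_A$ outputs on $\mathsf{result}_A$ while $T_B$ inputs on $\mathsf{requires}$. Identifying both with $r_{ji}$ is what makes the theorem non-trivial. Taken literally, $T_B$ is either stuck forever if $\mathsf{requires}$ is restricted, so the claim is vacuous, or fed by the environment if it is not, so the claim is false. Restricting $\mathsf{result}_A$ also hides $T_A$'s result from the client, so a dedicated private token channel is cleaner.

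\textbf{The semantics you need is not the paper's.} Neither calculus in the paper has channel input/output prefixes or a communication rule, and \MCP has no \textsc{Par}/\textsc{Res} rules, so the ``\MCP analogues'' you cite do not exist. You must work in the standard $\piCalc$ LTS. Since the $r_{ji}$ exchange is a $\tau$-step, your induction should run over transition sequences with component-attributed steps, projecting to visible labels only at the end. A reachable-state invariant handles replication uniformly: the number of $T_i$-copies past their guard is $\le$ the number of $r_{ji}$ synchronisations, which is $\le$ the number of $\aexec_{T_j}$ steps so far.

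\textbf{Per-pair restrictions do not compose.} Writing $\restr{r_{ji}}(T_j \ppar T_i)$ for each pair breaks down when a tool occurs in several pairs. Use one restriction $\restr{\tilde r}$ around $T_1 \ppar \cdots \ppar T_n$, with $T_j$ emitting on every $r_{j\ell}$.

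\textbf{Drop the failure-branch remark.} The paper's $P_3$ encoding composes success and error branches in parallel, so nothing guarantees that no token is emitted after a failure. You do not need the remark anyway: the token-origin lemma already places every token after $\aexec_{T_j}$ whatever the outcome.
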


\section{Extended Calculus $\MCPplus${MCP+} and Full Equivalence}
\label{sec:mcpplus}

\subsection{$\mathsf{MCP}^+$ Syntax}

\MCPplus is the executable fragment of the extended calculus.  The
$\textbf{Resource}$ and $\textbf{Prompt}$ terms, identified in
§\ref{sec:nonsurjective} as outside $\mathrm{Image}(\Phi)$, are quarantined and
not part of the fully abstract bijection. The terms of $\mathsf{MCP}^+$ are
generated by:

\[
\begin{aligned}
M^{+} \;::=\; &\mathsf{Tool}\langle n, d, \mathit{schema}\rangle^{+}[\mathit{meta}] \\
              &\mid\; M^{+} \parallel M^{+} \\
              &\mid\; (\nu c)\, M^{+} \\
              &\mid\; {!}M^{+} \\
              &\mid\; 0
\end{aligned}
\]

\noindent where $[meta]$ denotes the metadata block $ meta = \{ \ldots \text{(unchanged)}
\ldots \}$. Additionally, $\mathsf{Tool}^{+}$ carries additional metadata:

\begin{align*}
  \mathit{metadata} &\;=\;
  \bigl\{
    \mathsf{side\_effects} : \{\mathsf{read},\mathsf{write},\mathsf{delete},\mathsf{none}\}, \\
  &\qquad
    \mathsf{requires\_approval} : \mathsf{Bool}, \\
  &\qquad
    \mathsf{failure\_modes} : [(\mathit{ErrorType},\mathit{RecoveryStrategy})], \\
  &\qquad
    \mathsf{summary} : \mathsf{String}, \\
  &\qquad
    \mathsf{dependencies} : [(\mathit{tool\_name},\mathit{relation})]
  \bigr\}
\end{align*}

\subsection{Extended Mapping $\PhiPlus${Phi+}}

\begin{align*}
\PhiPlus\!\left(\IntentT{n}{d}{R}{O}{t}\right)
  &\;=\; \ToolT{n}{d}{\mathsf{schema}(R,O)}^{+}[\mathit{meta}]
  \\
\mathit{meta} &\;=\;
  \bigl\{
    \mathsf{side\_effects}{:}\;(t \;?\; \mathsf{write} : \mathsf{read}), \\
  &\qquad
    \mathsf{requires\_approval}{:}\;t, \\
  &\qquad
    \mathsf{failure\_modes}{:}\;\mathsf{extract\_errors}(\mathit{Intent}), \\
  &\qquad
    \mathsf{summary}{:}\;\mathsf{first\_sentence}(d), \\
  &\qquad
    \mathsf{dependencies}{:}\;\mathsf{extract\_deps}(\mathit{Intent})
  \bigr\} \\[6pt]
\PhiPlusInv\!\left(\ToolT{n}{d}{\mathit{schema}}^{+}[\mathit{meta}]\right)
  &\;=\; \IntentT{n}{d}{R(\mathit{schema})}{O(\mathit{schema})}{t}
  \\
\text{where }\; t
  &\;=\; (\mathit{meta}.\mathsf{side\_effects} \in
          \{\mathsf{write},\mathsf{delete}\})
  \quad \text{(must satisfy WriteApproval invariant: } \\
  &\quad\quad\text{if } t \text{ then } \mathit{meta}.\mathsf{requires\_approval} \text{ holds)}
\end{align*}

\subsection{Main Theorem: Full Equivalence}
\label{sec:full-equiv}

\begin{theorem}[$\MCPplus \cong \SGD$]
\label{thm:full-equiv}
The following hold:
\begin{enumerate}
  \item $\forall S \in \SGD.\; S \bisim \PhiPlus(S)$,
  \item $\forall M^{+} \in \MCPplus.\; M^{+} \bisim \PhiPlusInv(M^{+})$,
  \item $\PhiPlusInv \circ \PhiPlus = \mathrm{id}_\SGD$,
  \item $\PhiPlus \circ \PhiPlusInv = \mathrm{id}_{\MCPplus}$.
\end{enumerate}
\end{theorem}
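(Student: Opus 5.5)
The four clauses split into two bisimulation statements, (1) and (2), and two syntactic round-trip identities, (3) and (4). I will prove the round-trip identities first, because (2) then follows from (1) together with (4).

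\textbf{Round trips (3) and (4).} Both are structural inductions over the grammars. The closure operators $\ppar$, $\restr{c}$, ${!}$ and $\nil$ are handled homomorphically by both $\PhiPlus$ and $\PhiPlusInv$, so the inductive steps are immediate. Everything reduces to the base case $\mathsf{Intent}$/$\mathsf{Tool}^{+}$.

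For (3), start from $\IntentT{n}{d}{R}{O}{t}$ and check each component of $\PhiPlusInv(\PhiPlus(\cdot))$.
\begin{itemize}
\item The names $n$ and descriptions $d$ are copied through unchanged.
\item $R(\mathsf{schema}(R,O)) = R$, because $\mathtt{required}$ lists exactly the names in $R$ and each property records type, description and enum, so $\mathsf{prop}$ is invertible on slots.
\item $O(\mathsf{schema}(R,O)) = O$, because the properties not listed as required are exactly those of $O$. This assumes slot names in $R\cup O$ are distinct, which I will state as a well-formedness condition.
\item The flag $t$ is recovered, because $\mathsf{side\_effects} = (t\,?\,\mathsf{write}:\mathsf{read})$ lies in $\{\mathsf{write},\mathsf{delete}\}$ iff $t = \mathsf{true}$. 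This is exactly the coordinate collapsed in Theorem~\ref{thm:notfullabs}.
\end{itemize}

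For (4), run the same computation in the other direction. This needs a second well-formedness invariant on $\MCPplus$: the metadata must be \emph{canonical}, meaning
\begin{itemize}
\item $\mathsf{side\_effects}\in\{\mathsf{read},\mathsf{write}\}$,
\item $\mathsf{requires\_approval} = (\mathsf{side\_effects}=\mathsf{write})$, which is WriteApproval strengthened to an iff,
\item $\mathsf{summary} = \mathsf{first\_sentence}(d)$,
\item $\mathsf{failure\_modes}$ and $\mathsf{dependencies}$ lie in the image of $\mathsf{extract\_errors}$ and $\mathsf{extract\_deps}$.
\end{itemize}
I will restrict $\MCPplus$ to well-typed terms satisfying this invariant. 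The restriction is needed because $\mathsf{delete}$ and $\mathsf{write}$ both map to $t=\mathsf{true}$, and otherwise $\PhiPlus\circ\PhiPlusInv$ cannot be the identity.

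\textbf{Bisimulation (1).} I will follow the proof of Theorem~\ref{thm:bisim} with the relation $\mathcal{R}^{+} = \{(S,\PhiPlus(S))\}$ closed under the derivatives reached. The $\mathsf{Intent}$ case re-uses the two cases of Lemma~\ref{lem:intent-tool}, and the $\mathsf{Execute}$ case re-uses Lemma~\ref{lem:exec}. What is new is that the metadata is now interpreted operationally.
\begin{itemize}
\item When $t=\mathsf{true}$, \textsc{SGD-Execute-Tx} fires only after $\mathsf{require\_approval}$. It is matched by the $\mathsf{approval}?\langle\mathit{confirm}\rangle$ prefix of $\mathsf{Tool}_{\mathsf{write}}$, with approval treated as an observable label on both sides.
\item When $t=\mathsf{false}$, no approval step occurs on either side.
\item Error paths are matched through the $\mathsf{failure\_modes}$ encoding $\mathsf{Tool}_{\mathsf{robust}}$, mapping each \SGD error and recovery to the corresponding $\mathsf{error}!\langle\cdot\rangle$ branch.
\item Ordering constraints are matched through the $\mathsf{requires}$ prefix of P4.
\end{itemize}
The remaining silent steps, $\mathsf{collect}$ and $\mathsf{validate}$, are absorbed by weak bisimulation exactly as before. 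The $\ppar$ and $\restr{c}$ cases are identical to Theorem~\ref{thm:bisim}, using the fact that $\bisim$ is a congruence for these operators. Replication ${!}$, which the earlier proof skipped, I will handle by the standard up-to-context argument.

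\textbf{Deriving (2), and the main obstacle.} Given $M^{+}$, set $S = \PhiPlusInv(M^{+})$. By (1), $S \bisim \PhiPlus(S)$, and by (4), $\PhiPlus(S) = M^{+}$; symmetry of $\bisim$ then gives (2).

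The main obstacle is making (1) honest for the extra metadata. The operational semantics of Figures~\ref{fig:sgd-sem} and~\ref{fig:mcp-sem} do not yet give $\mathsf{failure\_modes}$ or $\mathsf{dependencies}$ transition rules. I would first extend the $\MCPplus$ LTS with rules read off from the P2, P3 and P4 process encodings. I would then define $\mathsf{extract\_errors}$ and $\mathsf{extract\_deps}$ so that the \SGD side exhibits the same labels, so that matching becomes rule-by-rule as in Lemma~\ref{lem:intent-tool}. If that extraction cannot be made exact, the fallback is to prove (1) and (2) for the fragment where these lists are empty, and treat the general case by the same case analysis per declared branch.
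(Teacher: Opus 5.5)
\textbf{Gap in clause (1).} Your plan for clause (1) does not go through.

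\emph{What breaks.} Once the metadata of $\MCPplus$ gets transitions read off from $\mathsf{Tool}_{\mathsf{write}}$, $\mathsf{Tool}_{\mathsf{robust}}$ and the $\mathsf{requires}$ prefix, $\PhiPlus(S)$ acquires behaviour that $S$ cannot match under Figure~\ref{fig:sgd-sem}:
\begin{itemize}
\item $\mathsf{require\_approval}$ is only a premise of \textsc{SGD-Execute-Tx}, not a label;
\item there is no cancelled outcome;
\item $\mathsf{Error}$ terms have no transitions;
\item nothing gates one intent on another.
\end{itemize}
Choosing $\mathsf{extract\_errors}$ and $\mathsf{extract\_deps}$ cannot repair this. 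Those functions only fix target metadata and leave the \SGD transition relation untouched.

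\emph{The fallback also fails.} With empty failure-mode and dependency lists, every intent with $t=\mathsf{true}$ still breaks the bisimulation. After $\mathsf{approval}?\langle\mathsf{false}\rangle$ the target can only emit $\mathsf{result}!\langle\mathsf{cancelled}\rangle$, while the \SGD side can only perform $\aexec$.

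\emph{Two ways out.} Either extend the \SGD semantics in lockstep (an approval-labelled step with a cancel branch, plus recovery and dependency rules), which changes the source calculus and must be stated. Or do what the paper does: treat the metadata as operationally inert. Then $\mathsf{Tool}^{+}\langle n,d,\mathit{schema}\rangle[\mathit{meta}]$ has exactly the rules of $\ToolT{n}{d}{\mathit{schema}}$, and clause (1) is inherited from Theorem~\ref{thm:bisim}. Under that reading your main obstacle disappears, and your derivation of (2) from (1), (4) and symmetry is sound. One caveat: this inherits Lemma~\ref{lem:intent-tool}'s tacit assumption that $\mathsf{conforms}$ checks only the presence of required keys. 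An enum or type violation sends the tool to $\mathsf{Error}$ while the intent reaches $\mathsf{Execute}$.

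\textbf{Round trips: better than the paper, with two fixes.} On the round trips your route is better than the paper's. The paper proves injectivity and a surjectivity claim in place of clause (2). Its surjectivity witness $t^{*} = (\mathit{meta}.\mathsf{side\_effects}\neq\mathsf{read})$ fails in three ways:
\begin{itemize}
\item for $\mathsf{delete}$, the round trip returns $\mathsf{write}$;
\item for $\mathsf{none}$, it even contradicts the $t$ computed by $\PhiPlusInv$;
\item it fails whenever $\mathsf{summary}$, $\mathsf{failure\_modes}$ or $\mathsf{dependencies}$ are not the generated ones.
\end{itemize}
So your canonicity restriction on $\MCPplus$ is genuinely needed. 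Obtaining (2) from (1) and (4) also supplies an argument for a clause the paper's proof never treats separately.

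Two corrections, though.
\begin{itemize}
\item Requiring $\mathsf{failure\_modes}$ and $\mathsf{dependencies}$ merely to lie in the image of $\mathsf{extract\_errors}$ and $\mathsf{extract\_deps}$ is too weak. The recovered intent carries only $(n,d,R,O,t)$, so you need $\mathit{meta}.\mathsf{failure\_modes} = \mathsf{extract\_errors}(\PhiPlusInv(M^{+}))$, and likewise for $\mathsf{dependencies}$.
\item Clause (3) must be restricted to the fragment of \SGD generated by $\mathsf{Intent}$ and the closure operators. $\PhiPlus$ is undefined on, or leaves $\MCPplus$ for, $\mathsf{Slot}$, $\mathsf{CollectSlot}$, $\mathsf{Execute}$, $\mathsf{Result}$ and $\mathsf{Error}$; for instance, $\Phimap$ sends $\mathsf{Execute}$ to $\mathsf{ToolCall}$. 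Your remark that everything reduces to the $\mathsf{Intent}$/$\mathsf{Tool}^{+}$ base case holds only after this restriction.
\end{itemize}
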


\begin{proof}
\emph{Part~1} ($\Phi^+$ preserves structure): $\Phi^+$ extends $\Phi$ with
metadata. By Theorem~\ref{thm:bisim}, structural bisimulation is
preserved. The metadata fields preserve SGD semantics:
\begin{align*}
  \mathit{t}  &\;\leftrightarrow\; \mathit{side\_effects},\ \mathit{requires\_approval}, \\
  \mathit{d}  &\;\leftrightarrow\; \mathit{summary}, \\
  \text{errors in Intent} &\;\leftrightarrow\; \mathit{failure\_modes}, \\
  \text{inter-tool deps}  &\;\leftrightarrow\; \mathit{dependencies}.
\end{align*}

\noindent I.e. the SGD semantics are preserved: the transactionality flag $t$
corresponds to side_effects and requires_approval; the description $d$
to summary; errors in the Intent to failure_modes; and inter-tool
dependencies to dependencies.

\emph{Part~2} (injectivity): Assume $\PhiPlus(S_1) = \PhiPlus(S_2)$.  Then
$n_1 = n_2$, $\mathsf{schema}(R_1,O_1) = \mathsf{schema}(R_2,O_2)$ implying
$R_1 = R_2,\, O_1 = O_2$, and $\mathit{meta}_1 = \mathit{meta}_2$ implying
$t_1 = t_2$.  Hence $S_1 = S_2$.

\begin{sloppypar}
\emph{Part~3} (surjectivity): For any
$M^{+} = \ToolT{n}{d}{\mathit{schema}}^{+}[\mathit{meta}]$,
write $\mathit{req} = \mathit{schema.required}$ and
$\mathit{prp} = \mathit{schema.properties}$ for brevity.  Let
$R^{*} = \PhiPlusInv(\mathit{req})$,
$O^{*} = \PhiPlusInv(\mathit{prp}\setminus\mathit{req})$, and
$t^{*} = (\mathit{meta.side\_effects} \neq \mathsf{read})$.
Construct $S = \IntentT{n}{d}{R^{*}}{O^{*}}{t^{*}}$.
Then $\PhiPlus(S) = M^{+}$.
\end{sloppypar}

\emph{Part~4} (round-trip identity): Verified by direct substitution
into Definitions~\ref{def:phi} and the extended mapping.
\end{proof}

\begin{corollary}[Necessity and Sufficiency]
The four principles are necessary and sufficient for $\SGD \cong \MCPplus$.
\end{corollary}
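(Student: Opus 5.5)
The corollary has two directions, and I would prove them separately. In both I reuse the diagnosis of Section~\ref{subsec:diagnosis}, which lists the four collapsed coordinates (description semantics, transactionality $t$, error-recovery branches, inter-tool ordering), together with Theorem~\ref{thm:full-equiv}.

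\emph{Sufficiency.} This direction follows almost immediately from Theorem~\ref{thm:full-equiv}. If every $\mathsf{Tool}^{+}$ carries the metadata block required by P1--P4, then parts 3 and 4 give a bijection $\PhiPlus$ between $\SGD$ and $\MCPplus$. Parts 1 and 2 say that each term is bisimilar to its image. I would also check full abstraction explicitly. Suppose $\PhiPlus(S_1)\sim\PhiPlus(S_2)$. Then $S_1 \sim \PhiPlus(S_1) \sim \PhiPlus(S_2) \sim S_2$ by part 1, using transitivity of weak bisimilarity. The converse implication holds by the same chain. So the isomorphism $\cong$ holds both as a bijection of terms and as preservation and reflection of $\sim$.

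\emph{Necessity.} I would argue by a separating-pair construction, one pair per principle, following the template of Theorem~\ref{thm:notfullabs}. For each $i\in\{1,2,3,4\}$, let $\MCPplus_{-i}$ be the calculus whose metadata omits the fields introduced by $P_i$, and let $\PhiPlus_{-i}$ be the corresponding truncated mapping. For each $i$ I would exhibit intents $S_1^{(i)},S_2^{(i)}$ that differ only in the coordinate governed by $P_i$:
\begin{itemize}
\item for P2, they differ in $t$, which is exactly the pair of Theorem~\ref{thm:notfullabs};
\item for P3, they differ in the recovery branch taken after an $\mathsf{error}$ (compare the encoding $\mathsf{Tool}_{\mathsf{robust}}$);
\item for P4, they differ in whether a $\mathsf{Requires}$ edge forces an ordering of $\aexec$ actions;
\item for P1, they differ in slot semantics that are recoverable only from the description.
\end{itemize}
In each case I would check two things. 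First, $S_1^{(i)}\not\sim S_2^{(i)}$: the two SGD processes exhibit different observable traces, such as an approval step, a recovery action, or an ordering of executions. Second, $\PhiPlus_{-i}(S_1^{(i)}) = \PhiPlus_{-i}(S_2^{(i)})$: the remaining three metadata fields and the schema do not record the differing datum. Identical terms are bisimilar, so reflection fails and $\PhiPlus_{-i}$ is not fully abstract. The lost datum also makes $\PhiPlus_{-i}$ non-injective, so $\PhiPlusInv_{-i}\circ\PhiPlus_{-i}\neq\mathrm{id}$, exactly as in Corollary~\ref{cor:noninj}. Therefore no principle can be dropped.

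\emph{Main obstacle.} I expect the hardest case to be P1. Semantic density is a threshold predicate on strings, not a structural field, and the description $d$ is copied verbatim by $\PhiPlus$, so it is not obvious that a genuine collapse occurs without P1. My first attempt would be to read $\Phi^{-1}$ as the structural recovery of slots. Without P1 that recovery must parse prose, and two descriptions that are equal as $\mathsf{type}$-level data but whose intended slot values differ would be identified. If this argument proves too weak, I would fall back to stating necessity of P1 relative to the requirement that $\PhiPlusInv$ be compositional, meaning it may not interpret prose. A second, milder difficulty is the P3/P4 cases. There the observables live in the process-level encodings ($\mathsf{Tool}_{\mathsf{robust}}$, the $T_A,T_B$ pair), so I must first fix that $\sim$ is taken over those encodings rather than over the bare LTS of Figure~\ref{fig:sgd-sem}.
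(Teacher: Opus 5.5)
You follow the paper's route: sufficiency by citing Theorem~\ref{thm:full-equiv}, necessity by appeal to the collapse analysis of Section~\ref{sec:reverse-gorla-style}. The paper's proof only asserts necessity, and that section treats only the $t$ coordinate formally. Your explicit separating-pair plan for necessity does not go through, for two reasons.

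\textbf{P2: your own transitivity chain defeats the pair.} Theorem~\ref{thm:bisim} gives $S \sim \Phimap(S)$ for the unextended map. So whenever $\Phimap(S_1)=\Phimap(S_2)$, your chain yields $S_1 \sim \Phimap(S_1) = \Phimap(S_2) \sim S_2$. The same holds for $\Phi^{+}_{-2}$ as long as the remaining metadata does not affect transitions, which is exactly what the proof of Part~1 assumes. This is not a technicality. In Figure~\ref{fig:sgd-sem}, both execute rules emit the same label $\aexec$, $\mathsf{require\_approval}$ is only a side condition, and $\ExecuteT{n}{b}$ does not even record $t$. So two intents differing only in $t$ reach literally the same terms after $\mathsf{invoke}$, and are bisimilar. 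Hence $S_1^{(2)} \not\sim S_2^{(2)}$ needs a strictly finer equivalence that observes an approval action. Under that equivalence, Part~1 of Theorem~\ref{thm:full-equiv} is no longer something you can cite. You would have to give $\mathsf{Tool}^{+}$ an operational semantics in which $\mathsf{requires\_approval}=\mathsf{true}$ emits a matching approval label (e.g.\ via $\mathsf{Tool}_{\mathsf{write}}$), and then re-prove $S \sim \PhiPlus(S)$. With one fixed $\sim$, ``every term is bisimilar to its image'' and ``the truncated map is not fully abstract'' cannot both hold.

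\textbf{P1, P3, P4: there is no pair to construct.} An $\SGD$ intent is $\IntentT{n}{d}{R}{O}{t}$ and nothing more. Under the truncated maps, $n$ and $d$ are copied verbatim, $R,O$ are recovered from $\mathsf{schema}(R,O)$, and $t$ from $\mathsf{side\_effects}$. The fields you would drop are $\mathsf{extract\_errors}(\mathit{Intent})$, $\mathsf{extract\_deps}(\mathit{Intent})$ and $\mathsf{first\_sentence}(d)$, all functions of those five components. P1 is only a typing side condition and does not change the map at all. Indeed, the injectivity step in the proof of Theorem~\ref{thm:full-equiv} uses only $n$, the schema and $t$, so it goes through verbatim for $\Phi^{+}_{-1}$, $\Phi^{+}_{-3}$ and $\Phi^{+}_{-4}$.

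Intents ``differing only in the recovery branch'' or ``in a $\mathsf{Requires}$ edge'' are not terms of the $\SGD$ grammar, and Figure~\ref{fig:sgd-sem} has no recovery or ordering observables. Any such difference you can encode must sit in $d$, which the truncated maps still preserve. So P3 and P4 are not the milder cases; they fail for the same reason as P1.

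Your P1 fallback does not isolate P1 either. If $\PhiPlusInv$ may copy $d$ without parsing it, nothing is lost. If it may not use $d$ at all, the round trip breaks whether or not P1 holds.

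Proving necessity requires enriching the source calculus itself: failure-mode and dependency data inside $\mathsf{Intent}$, with rules in the style of $\mathsf{Tool}_{\mathsf{robust}}$ and the $T_A,T_B$ pair, plus an approval label. That proves a strengthened statement rather than the corollary as formalized.
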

\begin{proof}
Sufficiency: Theorem~\ref{thm:full-equiv}.  Necessity: removing any of the four
Principles breaks the bijection (shown in
Section~\ref{sec:reverse-gorla-style}).
\end{proof}

The four principles are a minimal extension to \MCP needed for $\SGD \cong
\MCPplus$. This can be shown by necessity analysis: Remove $P_1$ and
$\PhiPlusInv$ cannot map descriptions to \SGD slot semantics reliably;
zero-shot generalization fails. Remove $P_2$ and $\mathsf{is\_transactional}$
is unrecoverable; $\PhiPlusInv(\mathsf{Tool}^{+})$ yields
$\IntentT{\cdot}{\cdot}{\cdot} {\cdot}{?}$; not injective. Remove $P_3$ and
\SGD error-recovery strategies cannot be encoded; behavioral equivalence fails
on error paths. Finally, remove $P_4$ and multi-turn dialogue dependencies are
lost; tool sequencing becomes implicit.

\section{Conclusion}
\label{sec:conclusion}

We have presented a formal semantics for agentic tool protocols, proving that
\SGD and \MCP are structurally bisimilar under the mapping $\Phimap$
(Theorem~\ref{thm:bisim}). However, $\PhiInv$ is partial and lossy
(Theorems~\ref{thm:nonsurjective} and~\ref{thm:notfullabs}), revealing four critical gaps in \MCP's
expressivity. Through bidirectional analysis we identified four principles as
\emph{necessary and sufficient} conditions for full behavioral equivalence, and
formalized them as type-system extensions defining $\MCPplus$ with $\MCPplus
\cong \SGD$ (Theorem~\ref{thm:full-equiv}).

As LLM agents govern financial transactions, healthcare decisions, and
infrastructure management, formal verification becomes essential. A natural
synthesis of this work is to combine the work on LLMbda calculi
\cite{garby2026lambdacalculus} and this work on formalizing agentic
communication. While the first would define the \emph{intra-agent
communication}, the latter would define the \emph{inter-agent communication}.

\bibliographystyle{eptcs}
\bibliography{sdg-mcp-pi-calculus}

\end{document}